\documentclass{article}

\usepackage{times}
\usepackage[letterpaper,textwidth=5.5in,textheight=9in]{geometry}
\usepackage[round]{natbib}
\usepackage[utf8]{inputenc}
\usepackage[T1]{fontenc}
\usepackage{amsmath,amssymb,bm,mathtools}
\usepackage{booktabs,multirow,tabularx,array}
\usepackage{graphicx}
\usepackage{microtype}
\usepackage{xcolor}
\usepackage{enumitem}
\usepackage{float}
\usepackage{wrapfig}
\usepackage{capt-of}
\usepackage{amsthm}
\usepackage{tikz}
\usetikzlibrary{arrows.meta,positioning,fit,backgrounds,calc}
\usepackage{hyperref}
\hypersetup{colorlinks=true,linkcolor=red!60!black,citecolor=blue!60!black,urlcolor=blue!60!black}
\usepackage{url}

\graphicspath{{figures/}}
\newcommand{\ECROM}{\textsc{Ecrom}}
\newcommand{\Prob}{\mathbb{P}}
\newcommand{\E}{\mathbb{E}}
\newcommand{\Ber}{\operatorname{Bernoulli}}
\newcommand{\BetaDist}{\operatorname{Beta}}
\newcommand{\ind}{\mathbf{1}}
\newcommand{\supportset}{\mathcal{J}}
\newtheorem{proposition}{Proposition}


\title{\vspace{-2.5em}Retrospective Open-Vocabulary Memory for Long-Term Object Search}

\author{Jiaming Wang \quad Zhiwei Xue \quad Chen Jizhuo \quad Peng Shiqi \quad Harold Soh\\[2pt]
National University of Singapore}
\date{\vspace{-1.5em}}

\begin{document}
\maketitle

\begin{abstract}
Long-term object search requires learning where objects \emph{usually} appear from repeated but uneven observations of a changing environment. We formulate retrospective open-vocabulary memory as probabilistic inference from censored observations, where the key idea is to reason with \emph{evidence per opportunity}: a detection or non-detection should influence belief only in proportion to the robot's opportunity to observe the corresponding location. We introduce \ECROM{}, which uses this principle to estimate long-term prevalence for concepts specified only at query time and converts the resulting belief directly into an active-search prior. To evaluate this problem, we introduce a controlled long-term benchmark in ten HM3D homes that independently varies object placement and observation opportunity across repeated traversals. \ECROM{} improves support-level AP on held-out queries by 4.5 points and search SPL by 4.2 points over the strongest competing memory in each metric. The benchmark, dataset, and code will be open-sourced. Project page: \url{https://jiaming.im/ecrom/}.
\end{abstract}

\begin{figure}[H]
\centering
\includegraphics[width=0.9\linewidth]{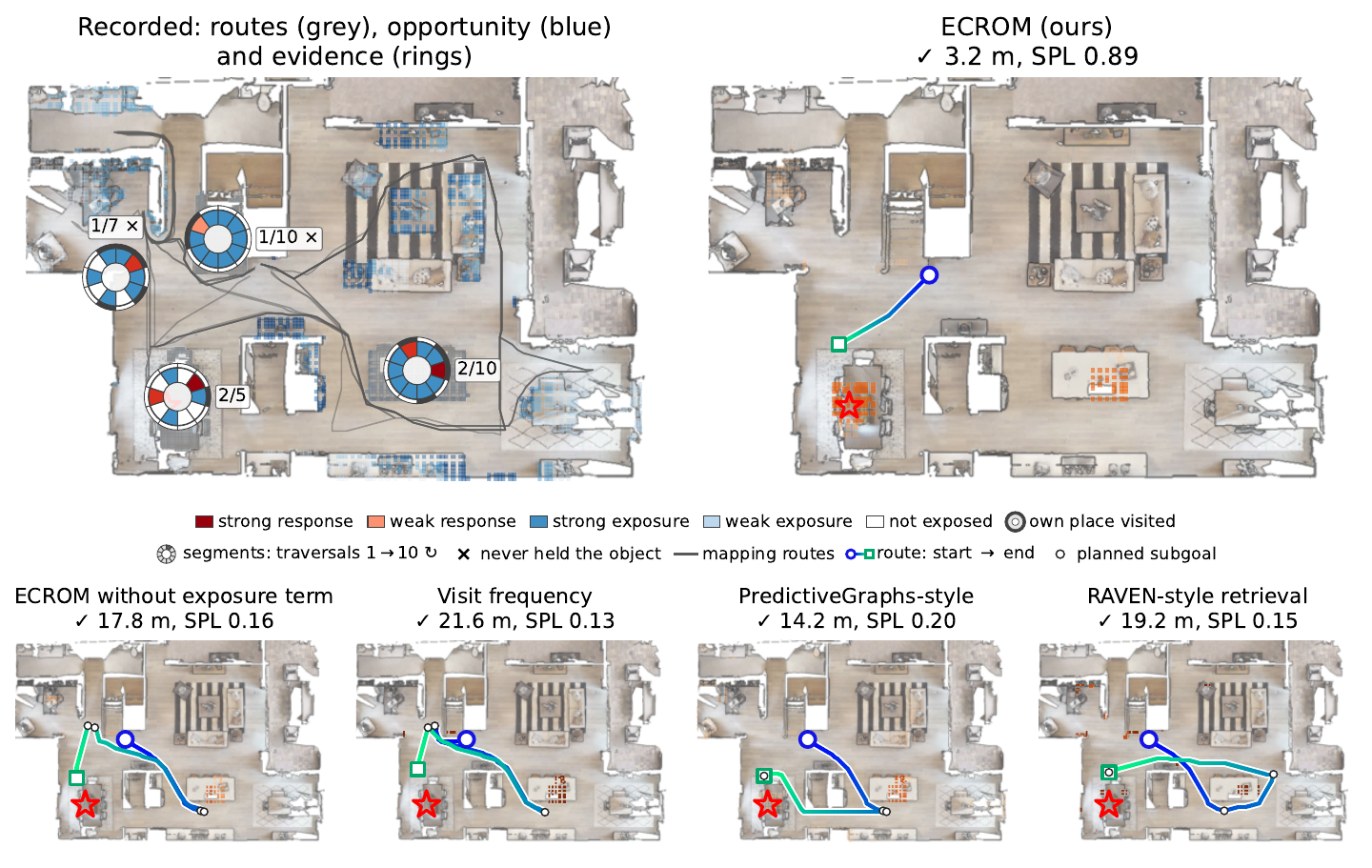}
\caption{\small{Searching for ``a straw hat'' in an HM3D home after ten mapping traversals. Two candidate surfaces, the dining table and the kitchen island, have the same number of detector responses but different observation opportunity. \emph{Top left}: mapping routes (grey), mean observation opportunity of each surface patch (blue), and one ring per candidate surface with one segment per traversal, colored by detector response and exposure; a label $k/n$ gives $k$ responses in the $n$ traversals that exposed the surface (table 2/5, island 2/10), and $\times$ marks surfaces that never held the hat. \emph{Other panels}: each memory's belief and the route induced from the same start under a 120\,m budget. \ECROM{} uses censored observations to exploit informative doorway views and verifies the table after only 3.2\,m. \ECROM{} without exposure term and visit frequency ignore how much of each surface was seen and favor the island, reaching the hat after 17.8\,m and 21.6\,m. The PredictiveGraphs-style persistence filter also favors the island (14.2\,m), while RAVEN-style retrieval ranks stored views and revisits the island from two sides before reaching the table (19.2\,m). Coverage (no memory) requires 52.8\,m. }}

\label{fig:teaser}
\end{figure}

\section{Introduction}

Object search is a core capability of embodied AI. Existing approaches learn category-level search priors for unseen environments~\citep{chaplot2020semexp,ramakrishnan2022poni}, ground open-vocabulary queries in spatial maps~\citep{huang2023vlmaps,gu2024conceptgraphs}, or reuse a map across multiple goals~\citep{khanna2024goat}. These settings largely assume that objects are static or that the latest observation represents their current location.

Real environments are different. Frequently used objects such as keys, tools, and mugs move repeatedly, yet often follow stable routines. A robot deployed over weeks or months therefore should not only remember where an object was last seen, but learn \emph{where it usually occurs}. Such a memory can guide future search and support retrospective open-vocabulary queries even for concepts specified only after the observations were recorded.

The key challenge is that repeated observations are highly uneven. Visiting a room does not imply observing every surface within it: the camera may face elsewhere, a surface may be occluded, or only part of it may be visible. Consequently, detection counts or visit frequencies confound object occurrence with observation frequency. A detection or non-detection should instead affect belief according to the robot's \emph{opportunity to observe} the corresponding location. Figure~\ref{fig:teaser} illustrates this distinction: two surfaces can produce the same number of detections yet imply very different long-term prevalence because they were observed with different frequency.

Existing long-term memories primarily retrieve historical observations~\citep{hu2026raven}, maintain the latest scene state~\citep{liu2025dynamem}, represent temporal scene histories~\citep{gorlo2026daaam}, or predict the future state of established object--receptacle relations~\citep{saavedraruiz2026predictivegraphs}. Our goal is different: to estimate the long-term prevalence of an arbitrarily specified concept while explicitly accounting for whether each past episode provided a meaningful chance to observe each location.

We introduce \ECROM{} (\emph{Exposure-Calibrated Retrospective Open-Vocabulary Memory}), a query-independent long-term visual memory based on \emph{evidence per opportunity}. Repeated RGB-D traversals are organized into a sparse atlas of surface supports. For each support $j$ and episode $e$, the memory stores its geometric observation opportunity $c_{je}\in[0,1]$. At query time, open-vocabulary detector responses are converted into calibrated likelihood-ratio evidence and combined with these opportunities through a censored observation model to infer the long-term prevalence $\theta_j^q$ of concept $q$. The resulting belief is used directly as a prior for active search.

Our contributions are:
\begin{itemize}[leftmargin=1.4em,itemsep=1pt,topsep=2pt]
    \item a formulation of retrospective open-vocabulary memory as censored inference over long-term object prevalence, explicitly separating semantic evidence from observation opportunity;
    \item \ECROM{}, a sparse query-independent memory that estimates prevalence for concepts specified only at query time and converts it directly into an active-search prior;
    \item LOOP-Bench, a controlled long-term benchmark in ten realistic homes where object occurrence and observation opportunity vary independently; and
    \item experiments showing improved distribution estimation and active search over counting, retrieval-, latest-state-, and PredictiveGraphs-style memories, including deployment on a physical robot.
\end{itemize}

\section{Related Work}

\paragraph{Object search and navigation.}
Object-goal navigation and semantic exploration learn category-level priors or object probability maps for searching unseen environments~\citep{chaplot2020semexp,ramakrishnan2022poni,wang2023smallobj,wang2024polo}, while open-vocabulary maps and 3D scene graphs such as VLMaps~\citep{huang2023vlmaps} and ConceptGraphs~\citep{gu2024conceptgraphs} ground language queries in a spatial map. These approaches primarily reason about \emph{where an object is}, whereas we use repeated observations of a changing environment to estimate \emph{where it usually occurs}.

\paragraph{Spatio-temporal memory.}
Recent memories explicitly represent changes over time. DynaMem~\citep{liu2025dynamem} and DovSG~\citep{yan2025dovsg} update an open-vocabulary 3D map or scene graph to the latest observed state; DAAAM represents historical states in a hierarchical 4D scene graph~\citep{gorlo2026daaam}; and retrieval-based approaches such as RAVEN~\citep{hu2026raven}, STAR~\citep{chen2026star}, and ObsGraph~\citep{lee2026obsgraph} preserve and retrieve informative past observations. FreMEn~\citep{krajnik2017fremen} learns periodic patterns in the probability of predefined binary states, such as whether a map cell is occupied, and PredictiveGraphs~\citep{saavedraruiz2026predictivegraphs} goes further by attaching probabilistic temporal filters to object--receptacle relations, modeling persistence, emergence, detector misses, and missing observations to predict future relation states. These memories retrieve past observations, maintain the current state, or predict how predefined states and established relations evolve; none estimates the long-term prevalence of a concept specified after the fact while accounting for whether each episode could observe each location.

\paragraph{Benchmarks for object search and scene change.}
ObjectNav~\citep{batra2020objectnav}, including its version in procedurally generated ProcTHOR houses~\citep{deitke2022procthor}, evaluates object search in static, single-episode environments, while GOAT-Bench~\citep{khanna2024goat} reuses a map across multiple goals but does not move objects between them. RIO~\citep{wald2019rio} records repeated scans of changing indoor scenes to evaluate re-localizing objects that moved, not learning where they usually occur. LOOP-Bench instead repeatedly samples object locations from hidden distributions while independently varying robot trajectories and viewpoints, creating uneven observation opportunity. Queries are specified only after the history is recorded, and evaluation measures both recovery of the long-term occurrence distribution and its use for active search.

\section{Problem Setup}
\label{sec:problem}

We record episodes (traversals) $e\in\{1,\ldots,E\}$, each containing a stream of RGB-D frames, odometry, and timestamps. Our goal is to build a query-independent memory $\mathcal M$ that can later infer where an arbitrarily specified object is likely to occur and use this prior to search efficiently, rather than exploring the environment from scratch. We organize this history as a place--view atlas built on the CROSS keyframe graph~\citep{wang2026cross}. Its sparse topological structure tolerates changes between sessions, such as moved objects, without fusing all observations into one static metric map. On the keyframe graph, we group nearby keyframes into graph-connected places $a$ and bin their viewing directions into canonical view directions $\psi$, yielding executable place--view pairs, or views, $u=(a,\psi)$. The stored depth is voxelized into surface supports $j\in\supportset$, where each support owns a set of surface voxels $\mathcal S_j$. A support is a fixed-size, class-agnostic geometric patch of up-facing surface ($0.5\times0.5$\,m per height band; Appendix~\ref{app:impl}). Thus, a view $u$ specifies \emph{where the robot can look}, while $j$ specifies \emph{where an object may occur}. After mapping, the user supplies arbitrary text $q$. Let $X_{je}^q\in\{0,1\}$ indicate whether an instance satisfying $q$ is physically present on support $j$ in episode $e$; an instance counts as present on $j$ when it lies within 45\,cm of a voxel of $\mathcal S_j$, the radius at which detections are attributed to supports. Its long-term prevalence at support $j$ is \begin{equation} \theta_j^q \triangleq \Prob(X_{je}^q=1),  \label{eq:target} \end{equation} which we assume to be stationary across episodes. The memory infers the posterior of this prevalence given the recorded history $\mathcal H_j$ of support $j$ and the query, $p(\theta_j^q\mid\mathcal H_j,q)$, independently for every support; its mean is the search prior over supports (Section~\ref{sec:planner}).

\section{Evidence per Opportunity}
\label{sec:method}

\begin{figure}[t]
\centering
\resizebox{\linewidth}{!}{\input{figures/overview.tex}}
\caption{\small{\textbf{System overview.} (1)~During query-independent mapping, RGB-D traversals are organized into a place--view atlas with surface supports and per-episode observation opportunities $c_{je}$. (2)~At query time, open-vocabulary detector responses are lifted to supports and combined with opportunity to infer the long-term prevalence posterior $p(\theta_j\mid \mathcal H_j, q)$ (Section~\ref{sec:posterior}). (3)~The resulting belief guides budgeted active search and is updated after each executed view (Eq.~\ref{eq:update}).}}
\label{fig:overview}
\end{figure}

Each recorded episode should change the belief about a support according to two separate sources of information, \emph{semantic evidence} and \emph{observation opportunity}: the memory must distinguish a genuine non-detection from a support that was never meaningfully observed, while keeping graded detector evidence rather than a binary hit. We derive an observation model that separates the two and use the resulting posterior as the search prior (Figure~\ref{fig:overview}).

\subsection{A censored observation model} \label{sec:model}

A low detector response may mean that the object was absent, but also that the relevant surface was barely or never observed. We therefore derive the sensor likelihood by explicitly separating \emph{physical presence} from the \emph{opportunity to observe it} (Figure~\ref{fig:latent}). For support $j$ in episode $e$ (we drop the query index $q$ when it is clear), we model physical presence $X_{je}$ and whether the recorded views expose the support, $V_{je}$, as
\begin{equation} X_{je}\sim\Ber(\theta_j),\qquad V_{je}\sim\Ber(c_{je}), \end{equation} 
where $\theta_j$ is the unknown long-term prevalence and $c_{je}\in[0,1]$ is the observation opportunity measured from the stored geometry. We assume $X_{je}\perp V_{je}$: where an object is placed is independent of whether the robot happens to observe that surface during the episode. The object can affect the detector only when both events occur, so we define the latent \emph{visible presence} $Z_{je}=X_{je}V_{je}$. At query time, the frozen detector produces semantic evidence $r_{je}$ with class-conditional densities and likelihood ratio \begin{equation}
\label{eq:Lambda}
r_{je}\mid Z_{je}{=}z\;\sim\;p_z(r),\quad z\in\{0,1\}, \qquad \Lambda(r)=\frac{p_1(r)}{p_0(r)}. 
\end{equation} 

\begin{wrapfigure}[15]{r}{0.30\linewidth} \vspace{-14pt} \centering \scalebox{0.9}{
\begin{tikzpicture}[
  font=\footnotesize,
  x=1cm, y=1cm,
  lat/.style={circle, draw=black!70, thick, minimum size=7.5mm, inner sep=0pt, fill=white},
  obs/.style={lat, fill=black!12},
  par/.style={inner sep=1pt},
  arr/.style={-{Latex[length=1.6mm]}, thick, black!70},
  lab/.style={font=\scriptsize, text=black!60, inner sep=1pt}
]
  \node[par] (theta) at (0,2.35) {$\theta_j$};
  \node[par] (c) at (2.4,2.35) {$c_{je}$};
  \node[lat] (X) at (0,1.15) {$X$};
  \node[lat] (V) at (2.4,1.15) {$V$};
  \node[lat] (Z) at (1.2,0.0) {$Z$};
  \node[obs] (r) at (1.2,-1.25) {$r$};
  \draw[arr] (theta) -- (X);
  \draw[arr] (c) -- (V);
  \draw[arr] (X) -- (Z);
  \draw[arr] (V) -- (Z);
  \draw[arr] (Z) -- (r);
  \node[lab, anchor=north] (lp) at ($(X.south)+(-0.2,-0.02)$) {present};
  \node[lab, anchor=north] (lv) at ($(V.south)+(0.3,-0.02)$) {observable};
  \node[lab, anchor=west] at ($(Z.east)+(0.1,0)$) {$Z=XV$};
  \node[lab, anchor=west] at ($(r.east)+(0.1,0)$) {$r\sim p_Z$};
  \begin{scope}[on background layer]
    \node[draw=black!40, rounded corners=3pt, dashed, inner sep=5pt, fit=(X)(V)(Z)(r)(c)(lp)(lv)] (plate) {};
  \end{scope}
  \node[lab, anchor=south east] at (plate.south east) {episode $e$};
\end{tikzpicture}} 
\caption{\small{Latent model for one support and episode: presence $X$, observability $V$, visible presence $Z=XV$, detector evidence $r$.}} \label{fig:latent} \vspace{-10pt} \end{wrapfigure} 
Marginalizing the unobserved presence and observability variables gives the likelihood used by \ECROM{}. 

\begin{proposition}[Opportunity-weighted likelihood] 
\label{prop:likelihood} 
Under the model above, 
\begin{equation} p(r_{je}\mid\theta_j,c_{je}) = p_0(r_{je}) \left[ 1+\theta_jc_{je}\bigl(\Lambda(r_{je})-1\bigr) \right]. \label{eq:likelihood} \end{equation} \end{proposition}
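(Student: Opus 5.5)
The plan is to marginalize the latent variables in two stages: first collapse $(X_{je},V_{je})$ into the binary visible-presence variable $Z_{je}$, then sum over $Z_{je}$. The structure of Figure~\ref{fig:latent} gives the factorization
\[
p(r_{je},X_{je},V_{je}\mid\theta_j,c_{je}) = p(X_{je}\mid\theta_j)\,p(V_{je}\mid c_{je})\,p_{X_{je}V_{je}}(r_{je}).
\]
This uses two facts. The assumption $X_{je}\perp V_{je}$ gives the first two factors. The detector evidence depends on $(X_{je},V_{je})$ only through $Z_{je}=X_{je}V_{je}$, per Eq.~\eqref{eq:Lambda}, which gives the last factor. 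This conditional-independence step is the only point needing care: we must state explicitly that $r_{je}$ is conditionally independent of $(X_{je},V_{je},\theta_j,c_{je})$ given $Z_{je}$. That is what the graphical model encodes, and it is the step I would justify first.

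Next, I compute the law of $Z_{je}$. Because $Z_{je}=1$ exactly when $X_{je}=1$ and $V_{je}=1$, independence gives $\Prob(Z_{je}=1\mid\theta_j,c_{je})=\theta_j c_{je}$. The other three configurations $(0,0),(0,1),(1,0)$ all map to $Z_{je}=0$, so $\Prob(Z_{je}=0\mid\theta_j,c_{je})=1-\theta_j c_{je}$. Summing the joint over the four configurations then reduces to a two-term mixture:
\[
p(r_{je}\mid\theta_j,c_{je}) = (1-\theta_j c_{je})\,p_0(r_{je}) + \theta_j c_{je}\,p_1(r_{je}).
\]

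Finally, I factor out $p_0(r_{je})$ and substitute $p_1=\Lambda\,p_0$ from Eq.~\eqref{eq:Lambda}. This gives $p_0(r_{je})\bigl[1-\theta_jc_{je}+\theta_jc_{je}\Lambda(r_{je})\bigr]$, which equals Eq.~\eqref{eq:likelihood}.

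A minor technicality is that $\Lambda$ must be defined wherever $p_0(r)>0$. On the set where $p_0(r)=0$, I would either assume mutual absolute continuity of $p_0$ and $p_1$, or read Eq.~\eqref{eq:likelihood} as shorthand for the mixture form above, which is valid everywhere. Beyond this, no real obstacle is expected; the result is direct marginalization.
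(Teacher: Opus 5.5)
Your proposal is correct and matches the paper's derivation step for step. You use $X\perp V$ to get $\Prob(Z=1\mid\theta,c)=\theta c$, marginalize over $Z$ to obtain the mixture $(1-\theta c)p_0(r)+\theta c\,p_1(r)$, and then factor out $p_0(r)$ using $\Lambda=p_1/p_0$. Your explicit remarks on the conditional independence of $r$ given $Z$ and on where $p_0(r)>0$ make precise two assumptions that the paper leaves implicit.
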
 
A derivation is given in Appendix~\ref{app:derivation}.
Equation~\ref{eq:likelihood} is the central ``evidence per opportunity'' rule. The detector determines the \emph{direction} of the update: $\Lambda>1$ supports presence, $\Lambda<1$ supports absence, and $\Lambda\approx1$ is uninformative. Opportunity determines its \emph{strength}. As $c_{je}\rightarrow0$, either kind of semantic evidence becomes neutral; in particular, a missing detection cannot penalize a surface that the robot had little chance to observe. When $c_{je}$ is large, both positive and negative evidence matter correspondingly more.

\subsection{Observation opportunity and semantic evidence}
\label{sec:opp}
\emph{Opportunity.}
For episode $e$, let $\Omega_e$ denote the set of surface voxels that returned depth within the detection range in at least one stored frame. For support $j$, we define
\begin{equation}
c_{je}=
\frac{
\sum_{\xi\in\mathcal S_j}
\ind[\xi\in\Omega_e]
}{
|\mathcal S_j|
}
\in[0,1],
\label{eq:exposure}
\end{equation}
the fraction of the support's voxels that returned depth in episode $e$ within the detection range; supports contain only up-facing voxels between $0.12$\,m and $1.5$\,m above the floor, so the floor is never a support. We use depth return as a geometric proxy for observation opportunity and also call $c_{je}$ the \emph{exposure} of support $j$ in episode $e$: a surface that returned depth was exposed to the sensor and could, in principle, have revealed an object resting on it. Occluded or unobserved regions contribute nothing. Exposure is thus a heuristic approximation of the probability that a present object is visible; Section~\ref{sec:limitations} discusses what it ignores. The opportunity is query independent and is therefore computed once during mapping.

\emph{Semantic evidence.}
Given a text query $q$, OWLv2~\citep{minderer2023owlv2} scores the stored RGB frames. We define $r_{je}$ as the maximum localized detector score assigned to support $j$ in episode $e$, avoiding repeated evidence from adjacent frames; a support with no localized detection in episode $e$ has $r_{je}=0$.

We calibrate the likelihood ratio $\Lambda(r)=p_1(r)/p_0(r)$ once and share it across all queries. In simulation, we calibrate on the designated calibration objects, which are never evaluated and for which the simulator provides the latent variable of Eq.~\ref{eq:Lambda} itself: $Z_{je}=1$ when the object is present on support $j$ in episode $e$ and visible in at least one stored keyframe of that episode (at least 50 pixels within 3\,m). We histogram the responses $r_{je}$ of all episode--support cells of the calibration objects separately for $Z_{je}=1$ and $Z_{je}=0$, pooled over houses, and use isotonic regression so that a higher score never gives weaker evidence for presence. On the real robot, we use a VLM (GPT-6) to label sampled stored keyframes as showing the object or not and fit $\Lambda$ from these labels. Further details are provided in Appendix~\ref{app:calibration}.

\subsection{Posterior over long-term prevalence}
\label{sec:posterior}

We place a Beta prior on the prevalence of each support, $\theta_j\sim\BetaDist(\alpha_0,\beta_0)$.
 Let $ \mathcal H_j = \{(r_{je},c_{je})\}_{e=1}^{E}$
denote the recorded evidence and opportunities for support $j$. Assuming episodes are conditionally independent given $\theta_j$, Bayes' rule gives the posterior below.

\begin{proposition}[Posterior prevalence]
\label{prop:posterior}
\begin{equation}
p(\theta_j\mid\mathcal H_j,q)
\propto
\theta_j^{\alpha_0-1}
(1-\theta_j)^{\beta_0-1}
\prod_{e=1}^{E}
\left[
1+\theta_jc_{je}
(\Lambda(r_{je})-1)
\right].
\label{eq:posterior}
\end{equation}
\end{proposition}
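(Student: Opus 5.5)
The plan is to apply Bayes' rule to the Beta prior, use the conditional-independence assumption to factor the likelihood of $\mathcal H_j$ over episodes, and substitute Proposition~\ref{prop:likelihood} for each factor. Since the $c_{je}$ are computed from geometry during mapping and do not depend on $\theta_j$ or on the query, we treat them as fixed covariates. The history therefore enters only through the detector responses $r_{j1},\dots,r_{jE}$ conditioned on $c_{j1},\dots,c_{jE}$. Bayes' rule then gives
\begin{equation*}
p(\theta_j\mid\mathcal H_j,q)\propto p(\theta_j)\,p(r_{j1},\dots,r_{jE}\mid\theta_j,c_{j1},\dots,c_{jE}),
\end{equation*}
where the query $q$ is implicit because it fixes the detector densities $p_0,p_1$ and hence $\Lambda$.

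First, I would invoke the assumption that episodes are conditionally independent given $\theta_j$. This means that, given $\theta_j$ and the $c_{je}$, the triples $(X_{je},V_{je},r_{je})$ are independent across $e$. It is worth stating explicitly that $X_{je}$ and $V_{je}$ are drawn afresh in each episode, so that marginalizing the latent variables does not couple episodes. The joint likelihood then factors as $\prod_{e}p(r_{je}\mid\theta_j,c_{je})$. Second, I would substitute Eq.~\ref{eq:likelihood} into each factor, which gives $\prod_e p_0(r_{je})\,[1+\theta_jc_{je}(\Lambda(r_{je})-1)]$. Third, I would note that $\prod_e p_0(r_{je})$ does not depend on $\theta_j$ and absorb it into the proportionality constant. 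Finally, I would write the prior density as $\theta_j^{\alpha_0-1}(1-\theta_j)^{\beta_0-1}$, dropping its normalizer $B(\alpha_0,\beta_0)$, which yields Eq.~\ref{eq:posterior}.

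The main obstacle is not algebraic but a matter of well-posedness. The statement is proportional and implicitly requires the right-hand side to be a nonnegative, integrable function of $\theta_j$ on $[0,1]$. Each factor is the affine mixture $(1-\theta_jc_{je})+\theta_jc_{je}\Lambda(r_{je})$, a convex combination of $1$ and $\Lambda\ge 0$ with weight $\theta_jc_{je}\in[0,1]$. It is therefore nonnegative, and it is bounded above by $\max(1,\Lambda)$. Integrability then follows from that of the Beta kernel, since $\alpha_0,\beta_0>0$.

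One degenerate case needs care. If $\Lambda(r_{je})=0$ and $c_{je}=1$, the factor is $1-\theta_j$, which still leaves a positive normalizer. The normalizer stays positive in general because every factor equals $1$ at $\theta_j=0$ and is continuous, so the product is positive near $\theta_j=0$. I would also remark that mixed discrete/continuous responses, such as the atom at $r_{je}=0$, are handled by reading $p_z$ as densities with respect to a common dominating measure. Under that reading, the factorization and cancellation steps go through unchanged.
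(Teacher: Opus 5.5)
Your proposal is correct and follows essentially the same derivation as the paper's Appendix~\ref{app:posterior-derivation}: Bayes' rule with the Beta prior, factorization over episodes that are conditionally independent given $\theta_j$ and the $c_{je}$, substitution of Proposition~\ref{prop:likelihood}, and absorption of $\prod_e p_0(r_{je})$ and the Beta normalizer into the proportionality constant. Your additional well-posedness checks go slightly beyond what the paper states and are sound: each factor is a convex combination of $1$ and $\Lambda$, so the unnormalized posterior is nonnegative and integrable and has a positive normalizer.
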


The posterior has two useful invariances. First, an episode with $c_{je}=0$ contributes a factor of one and therefore leaves the belief unchanged. Second, duplicating frames within an episode changes neither $c_{je}$, which is computed from a union of observed surface voxels, nor $r_{je}$, which is a maximum over localized scores.

The prior is sparse, $\alpha_0\ll\beta_0$, reflecting that a particular named object is unlikely to occupy any individual surface patch. The prior parameters are selected using the calibration objects and fixed for all evaluation queries. Because Eq.~\ref{eq:posterior} is generally non-conjugate, we evaluate the posterior numerically using a grid with increased resolution near $\theta_j=0$ (Appendix~\ref{app:impl}). The posterior mean
$
\hat{\theta}_j^q=\E[\theta_j\mid\mathcal H_j,q]
$
gives our estimate of the long-term prevalence of concept $q$ at support $j$.

\subsection{From memory to active search}
\label{sec:planner}

For a unique target, we normalize the posterior prevalence estimates into search weights over supports, $b_j\propto\hat{\theta}_j^q$, and use them directly for active search. A single placement can lie within 45\,cm of several adjacent supports, so these weights rank where to look rather than forming a distribution over mutually exclusive locations. For a category, the $\hat{\theta}_j^q$ remain independent occurrence probabilities.

Search actions are the executable views $u=(a,\psi)$ introduced in Section~\ref{sec:problem}. From the robot's current node $\mathrm{cur}$, executing view $u$ incurs cost $C_G(\mathrm{cur},u)+d_0$, where $C_G$ is shortest-path travel distance and $d_0>0$ accounts for acquiring an observation. A view may expose several supports; $c_j(u)$ denotes the opportunity to observe support $j$ from that view. We greedily choose the view that exposes the most current search weight per unit cost:
\begin{equation}
u^\star =
\arg\max_u
\frac{\sum_j b_j c_j(u)}
{C_G(\mathrm{cur},u)+d_0}.
\label{eq:index}
\end{equation}

After executing a view, the search terminates if the detector fires and verification confirms the target (in simulation, at least 500 target pixels are visible in the view; on the robot, an operator confirms it). Otherwise, the failed look provides negative evidence for the supports that were exposed. Let $p_{\mathrm{det}}$ be the probability that a visible target is successfully detected and verified, estimated from the calibration objects (Appendix~\ref{app:calibration}). We update
\begin{equation}
b'_j
\propto
b_j\bigl(1-p_{\mathrm{det}}c_j(u)\bigr).
\label{eq:update}
\end{equation}
Thus, a failed look strongly discounts well-observed supports, weakly discounts partially observed supports, and leaves unobserved supports unchanged. A detector trigger rejected by verification is treated as an unsuccessful look and therefore never raises the weight of the supports it exposed.

If the weights form a distribution over exclusive locations and view costs are fixed and order independent, ordering views by exposed probability per unit cost minimizes expected discovery cost (Appendix~\ref{app:planner}; \citealp{kadane1971optimal,stone1975theory}). We therefore use this criterion as a greedy heuristic, recomputing it after every observation to account for shared placements across adjacent supports and the robot's changing, position-dependent travel costs.

\section{LOOP-Bench: Long-Term Object Occurrence Prediction}
\label{sec:benchmark}

We introduce \textbf{LOOP-Bench} (\textbf{L}ong-Term \textbf{O}bject \textbf{O}ccurrence \textbf{P}rediction), a controlled benchmark that tests whether a robot can infer \emph{where an object usually occurs} from repeated, uneven observations. It contains ten HM3D-Semantics homes~\citep{ramakrishnan2021hm3d,yadav2023hm3dsem}, where objects are repeatedly placed from hidden long-term distributions while routes and viewpoints are sampled independently, separating prevalence from observation opportunity (Figure~\ref{fig:benchmark}).

Each home contains 20 past mapping traversals, giving 200 across the benchmark; each traversal provides RGB-D frames with a per-frame odometry pose in one common frame per home, from which a method may build any map. Each home has eight objects: five are evaluation queries, giving 50 queries in total, and three are calibration objects that are used only to select parameters and are excluded from evaluation. For each query, we evaluate both recovery of its ground-truth spatial occurrence distribution and closed-loop active search. Active search uses three independently sampled test placements per query, resulting in 150 search trials; because some placement distributions include an absent outcome, the object is not in the home in 22 of them. Full benchmark construction and evaluation details are provided in Appendix~\ref{app:benchmark}.

\begin{figure}[t]
\centering
\includegraphics[width=\linewidth]{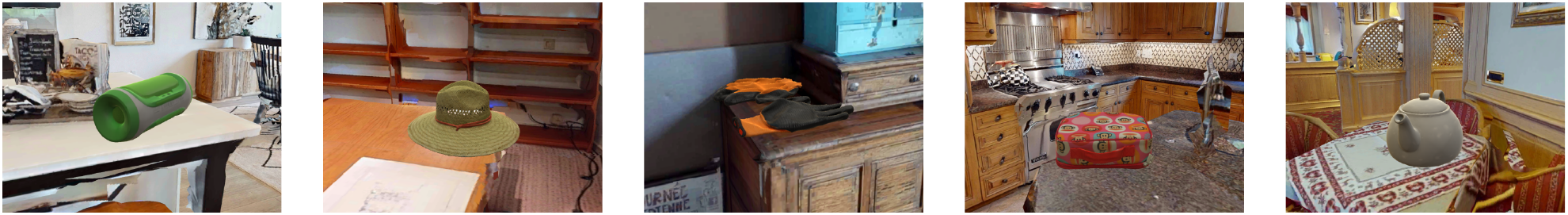}
\caption{\small Benchmark examples from five houses, rendered in the benchmark simulator. For every traversal each object is placed on a receptacle drawn from a hidden distribution and settled by physics; routes and gaze are sampled independently of the placements, so an object's presence cannot cause the camera to look at it.}
\label{fig:benchmark}
\end{figure}

\section{Simulation Experiments}
\label{sec:sim}

\begin{table}[t]
\caption{Retrospective distribution estimation and active search on LOOP-Bench (50 queries in 10 houses, 20 mapping traversals per house). AP, tie-aware Top-5 mass and AUROC rank the supports against the support-level ground truth; Bernoulli NLL scores each support's unnormalized prevalence against the probability that the object rests within 45\,cm of it (--: the score is not a probability). Search uses three test placements per query (150 trials) with a 120\,m budget; SR and SPL average over all 150 trials, and the 22 trials without the object count as failures. Bold marks the best value in each column across all rows; the lower block ablates \ECROM{}. Paired 95\% intervals are in Appendix~\ref{app:uncertainty}.}
\label{tab:sim-results}
\centering
\setlength{\tabcolsep}{3.4pt}
\begin{tabular}{lcccccc}
\toprule
Method & AP$\uparrow$ & Top-5$\uparrow$ & AUROC$\uparrow$ & $\mathrm{NLL}_{\mathrm{Ber}}\downarrow$ & SR$\uparrow$ & SPL$\uparrow$ \\
\midrule
\textbf{\ECROM{} (ours)} & \textbf{.216} & \textbf{.368} & \textbf{.760} & \textbf{.0287} & \textbf{.587} & \textbf{.381} \\
Detection count & .148 & .318 & .659 & -- & .440 & .330 \\
Visit frequency & .116 & .261 & .619 & .0811 & .433 & .299 \\
Opportunity-normalized count & .171 & .310 & .655 & .0773 & .467 & .310 \\
PredictiveGraphs-style (Perpetua$^\star$) & .160 & .263 & .659 & .0302 & .533 & .339 \\
RAVEN-style retrieval & .150 & .340 & .600 & -- & .420 & .310 \\
DynaMem-style latest state & .101 & .197 & .629 & -- & .400 & .267 \\
Coverage (no memory) & .011 & .006 & .500 & .0357 & .247 & .074 \\
\midrule
\ECROM{} without exposure term ($c_{je}{=}1$) & .191 & .342 & .718 & .0310 & .520 & .333 \\
\ECROM{} with binary exposure ($\ind[c_{je}{>}0]$) & .195 & .354 & .733 & \textbf{.0287} & .540 & .360 \\
\ECROM{} with binary sensor ($\tau{=}.25$) & .195 & .345 & .746 & .0297 & .473 & .339 \\
\ECROM{} with binary sensor ($\tau{=}.15$) & .175 & .318 & .752 & .0309 & .540 & .303 \\
\bottomrule
\end{tabular}
\end{table}

\subsection{Setup}
For each of the 50 LOOP-Bench queries, we build each method's memory from the 20 mapping traversals of its house and evaluate its predictions on the shared surface supports. We derive two ground truths from the hidden placement distribution \(\pi\): a 15\,cm footprint allocates each placement to supports, giving the occurrence probabilities \(g_j\) used for ranking, and \(\theta_j^\ast=\sum_k\pi_km_{jk}\) is the probability that the object rests within 45\,cm of support \(j\) (\(m_{jk}=1\) when placement \(k\) does), the prevalence of Eq.~\ref{eq:target} (Appendix~\ref{app:support-gt}).

We report AP, Top-5 mass, AUROC and Bernoulli NLL, averaged over the 50 queries. AP measures whether supports where the object actually occurs (\(y_j=\mathbf{1}[g_j>0]\)) are ranked ahead of those where it never occurs. Top-5 mass measures how much of the true occurrence probability lies in the five highest-ranked supports. AUROC is the probability that a support with \(y_j=1\) is scored above one with \(y_j=0\). These three metrics depend only on the order of the scores and measure whether a memory ranks first the supports where the object physically rests. \(\mathrm{NLL}_{\mathrm{Ber}}=-\frac{1}{J}\sum_j[\theta_j^\ast\log\hat\theta_j+(1-\theta_j^\ast)\log(1-\hat\theta_j)]\), with \(J=|\supportset|\) supports, instead scores the unnormalized prevalence estimates \(\hat\theta_j\) (Eq.~\ref{eq:target}) for the methods whose output is a probability, and changes when all prevalences are rescaled. Tie handling and clipping are specified in Appendix~\ref{app:support-gt}.

For active search, each method's estimated distribution is the search prior on the 150 test trials of Section~\ref{sec:benchmark}, with the same placement, start and \(120\,\mathrm{m}\) travel budget for every method. We report success rate (SR) and success weighted by path length (SPL)~\citep{anderson2018evaluation} over all trials, counting the 22 absent-target trials as failures; Appendix~\ref{app:uncertainty} reports the 128 present-target trials separately.

\subsection{Baselines}
\emph{Detection count} scores each support by the number of past traversals
in which the query is detected there, $\sum_e D_{je}$ with $D_{je}=\ind[r_{je}\geq\tau]$ and $\tau=.25$. \emph{Visit frequency} instead
normalizes this count by the number of traversals that visit the
corresponding place, estimating the frequency of detection per visit.
\emph{Opportunity-normalized count} divides the detection count by the summed opportunity, \(\min(1,\sum_e D_{je}/\sum_e c_{je})\); before clipping, this ratio is unbiased for a perfect binary detector (Appendix~\ref{app:model-properties}).
We further compare with \emph{RAVEN-style retrieval}~\citep{hu2026raven},
\emph{DynaMem-style latest-state memory}~\citep{liu2025dynamem}, and a
\emph{PredictiveGraphs-style} implementation of Perpetua$^\star$~\citep{saavedraruiz2026predictivegraphs,saavedraruiz2025perpetua}.
Finally, \emph{Coverage} uses no object memory and searches from a uniform
spatial prior. For a fair comparison, all methods receive the same stored observations, surface supports and detector
scores, including the same localized detections, so the rows differ only in how each memory turns this shared evidence into a belief, not in mapping or detection quality; the RAVEN-, DynaMem- and PredictiveGraphs-style rows reproduce the memory statistic of each system on these observations (Appendix~\ref{app:baselines}). For search, every memory is normalized into search weights in the same way and executed with the same planner (Section~\ref{sec:planner}).

\subsection{Results}

Table~\ref{tab:sim-results} shows that \ECROM{} gives the best long-term occurrence estimate on all four distribution metrics, raising AP from .171 for the strongest baseline to .216. Detection count and visit frequency perform worse because they confound occurrence with observation frequency; visit frequency is the weaker of the two because the visit denominator carries little information about the object. Dividing detections by opportunity instead of visits improves every ranking metric and gives the best AP among the baselines, but without a prior this count assigns zero prevalence to every exposed support without a detection, so its Bernoulli NLL is worse than the constant prior of Coverage (.0773 against .0357). RAVEN-style retrieval and DynaMem-style memory emphasize strong historical observations or the latest state rather than long-term prevalence. PredictiveGraphs-style filtering is the strongest baseline in Bernoulli NLL and in search, but it models the persistence and emergence of already detected object--support relations rather than aggregating repeated evidence into long-term prevalence. Supports with weak or missed detections therefore receive little probability mass, and recurring locations are harder to recover: \ECROM{} ranks the true supports clearly better (AUROC .760 against .659, with paired intervals for AP, Top-5 and AUROC that exclude zero), while the two methods are close in Bernoulli NLL (Table~\ref{tab:paired}).

These gains carry over to active search. \ECROM{} reaches .587 SR and .381 SPL, ahead of PredictiveGraphs-style memory (.533 and .339), the strongest baseline, although the paired intervals against it include zero (Table~\ref{tab:paired}); Coverage without memory succeeds in only a quarter of trials.

\subsection{Ablations}
\label{sec:ablation}

The lower block of Table~\ref{tab:sim-results} isolates the two main components of \ECROM{}. To test whether explicitly modeling observation opportunity is necessary, \ECROM{} without exposure term sets $c_{je}=1$ for every support in every traversal while keeping the same semantic evidence, likelihood ratio and posterior inference. Every distribution and search metric degrades; AP drops from .216 to .191 and SR from .587 to .520. A non-detection on a partially observed or unobserved support is now treated as if the support had been fully inspected, producing overly strong negative evidence: the prevalences of this variant sum to 2.32 per query, less than half of the true 4.76, whereas those of \ECROM{} sum to 5.10. The gain from the exposure term depends on how well the true supports were observed. When the queries are split into thirds by the opportunity summed over their true supports, the AP gain is $+.057$ for the least observed third and vanishes for the best observed third (Table~\ref{tab:mechanism}). Averaged over all queries the gain is therefore diluted, and its paired intervals include zero for every metric except Bernoulli NLL (Table~\ref{tab:paired}).

To separate missing observations from partial ones, \ECROM{} with binary exposure uses $\ind[c_{je}>0]$: it still ignores traversals that never observed the support, but treats every partial observation as a full inspection. It recovers part of the gap and matches \ECROM{}'s Bernoulli NLL, but ranks supports worse (AUROC .733 against .760) and searches less successfully, so fractional opportunity mainly sharpens the ranking of supports.

To test whether the graded likelihood ratio is useful, \ECROM{} with binary sensor replaces $\Lambda(r_{je})$ in Eq.~\ref{eq:posterior} by $\Lambda_{\mathrm{bin}}(D_{je})$ for a threshold $\tau\in\{.25,.15\}$, where $\Lambda_{\mathrm{bin}}(1)=h_\tau/f_\tau$ and $\Lambda_{\mathrm{bin}}(0)=(1-h_\tau)/(1-f_\tau)$, with hit rate $h_\tau=\Prob(D_{je}{=}1\mid Z_{je}{=}1)$ and false-alarm rate $f_\tau=\Prob(D_{je}{=}1\mid Z_{je}{=}0)$ computed from the same calibration histograms that define $\Lambda$ (Appendix~\ref{app:calibration}), so graded versus thresholded evidence is the only change. Both thresholds are worse than graded evidence on every metric, in different ways: the high threshold ($\tau=.25$) misses about half of the visible instances and has the lowest success rate of the variants (.473), while the low threshold ($\tau=.15$) admits four times as many false alarms and has their lowest AP (.175) and SPL. Graded evidence keeps weak responses without treating them like strong ones. Together, these ablations show that \ECROM{} benefits from modeling both \emph{how well a support was observed} and \emph{how strong the semantic evidence was}.

\begin{figure}[t]
\centering
\begin{minipage}[t]{0.57\linewidth}
\vspace{0pt}
\includegraphics[height=4.1cm]{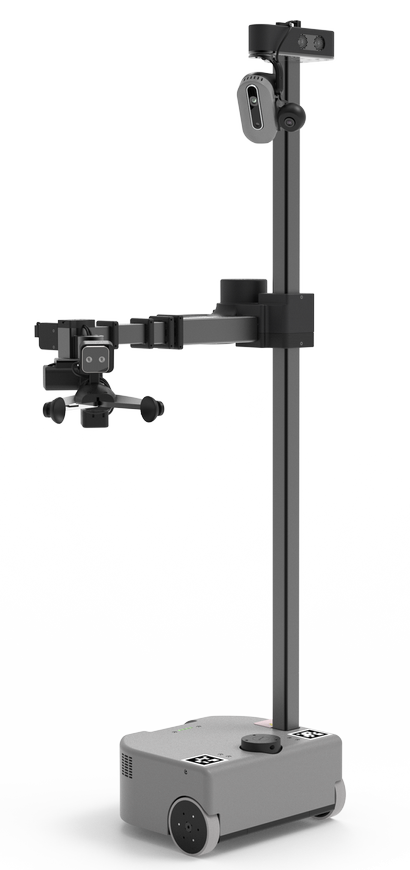}\hfill
\includegraphics[height=4.1cm]{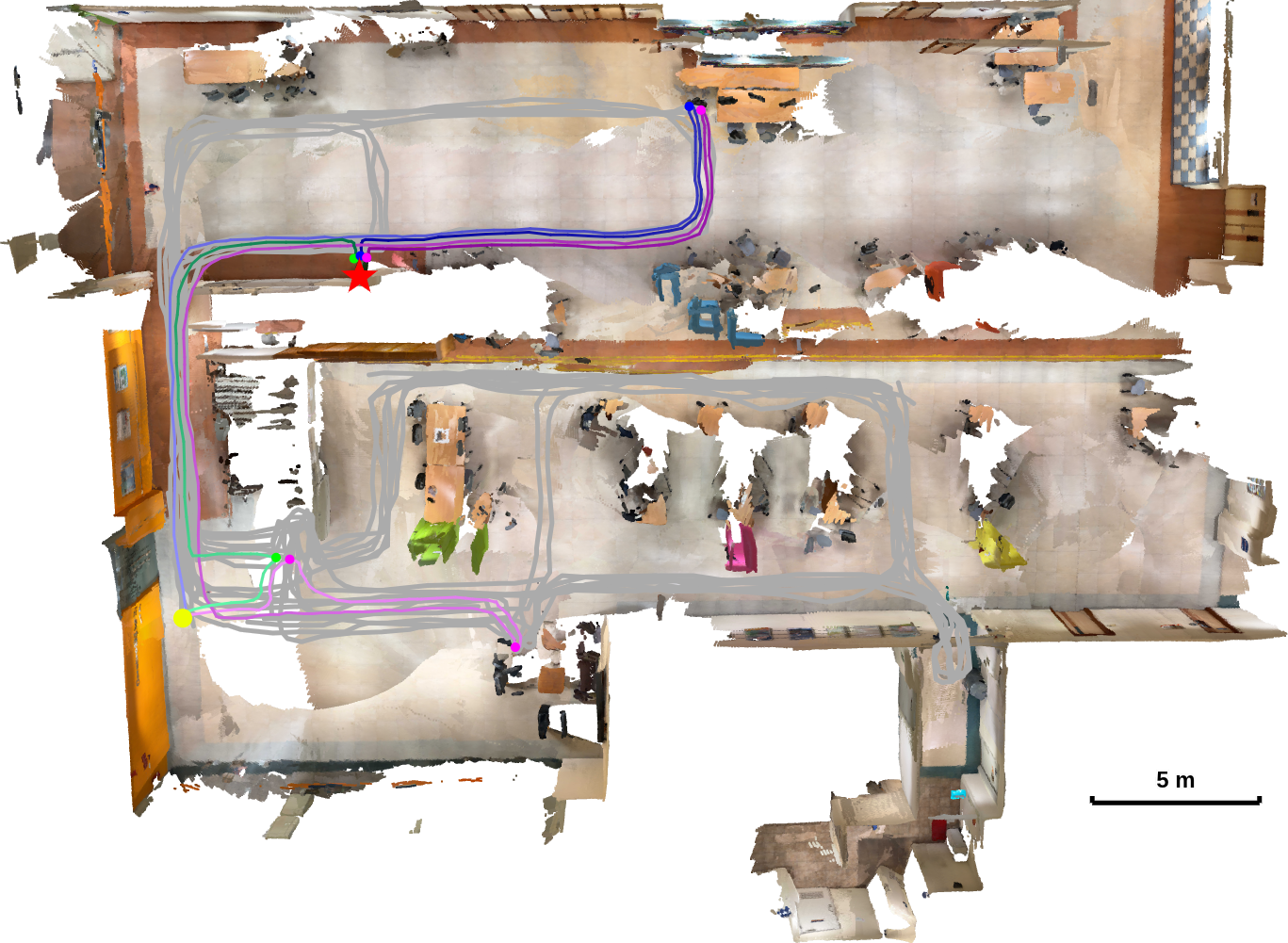}
\end{minipage}\hfill
\begin{minipage}[t]{0.40\linewidth}
\vspace{0pt}
\setlength{\abovecaptionskip}{0pt}
\caption{\small Real-robot experiment. \emph{Left}: Hello Robot Stretch~3 (image: Hello Robot). \emph{Right}: the $\approx$500\,m$^2$ office floor with the ten mapping traversals (grey) and, for one query, the search routes of \ECROM{} (green), detection count (blue) and PredictiveGraphs-style (magenta) from the start (yellow dot) to the target (red star). Scale bar 5\,m.}
\label{fig:realrobot}
\end{minipage}
\end{figure}

\section{Real-Robot Experiment}
\label{sec:realrobot}

\paragraph{Setup.}
We test whether the prevalence learned by \ECROM{} transfers to physical robot search. A Hello Robot Stretch~3 operates on an approximately $500\,\mathrm{m}^2$ office floor containing desks, tables, shelves, and other candidate supports (Figure~\ref{fig:realrobot}). Each object is assigned a hidden distribution over supports. Before each of ten mapping traversals, its location is resampled from this distribution; the robot records only RGB-D observations and odometry.

After mapping, we evaluate five queried objects with three independently sampled test placements each, giving 15 search trials. All methods share the start, detector, candidate views and planner. We compare \ECROM{} with detection count and PredictiveGraphs-style memory. A trial succeeds when the detector fires at an inspected view and an operator confirms the target within a budget of 20 inspected views. Table~\ref{tab:sim-vs-robot} lists every difference from the simulated system, including a belief update on the raw detector event $\ind[r\ge\tau]$. Failed trials count their travel until the budget ran out.

\begin{wraptable}{r}{0.44\linewidth}
\vspace{-12pt}
\setlength{\abovecaptionskip}{0pt}
\centering
\caption{\small Real-robot search on the Stretch~3 over 15 trials per method (five queried objects, three placements each; 20-view budget): success rate, mean travel and mean number of candidate views inspected. Per-query results are in Table~\ref{tab:realrobot-perquery}.}
\label{tab:realrobot}
\setlength{\tabcolsep}{3pt}
\resizebox{\linewidth}{!}{%
\begin{tabular}{lccc}
\toprule
Method & SR$\uparrow$ & Travel (m)$\downarrow$ & Views$\downarrow$ \\
\midrule
\textbf{\ECROM{} (ours)} & \textbf{.93} & \textbf{19.4} & \textbf{4.7} \\
Detection count & .67 & 29.1 & 9.3 \\
PredictiveGraphs-style & .73 & 29.5 & 8.9 \\
\bottomrule
\end{tabular}
}
\vspace{-14pt}
\end{wraptable}
\paragraph{Results.}
The real-robot results in Table~\ref{tab:realrobot} follow the same trend as simulation. \ECROM{} succeeds in 14 of 15 trials, against 10 for detection count and 11 for PredictiveGraphs-style memory, and reduces mean travel from about 29\,m to 19.4\,m and the number of inspected views from about nine to 4.7, so the prior focuses search on fewer, more promising locations.

Per query (Table~\ref{tab:realrobot-perquery}), the largest travel reductions occur for the red cup (25.4\,m against 42.2\,m for the better baseline) and the scissors (9.0\,m against 25.6\,m). These gains are consistent with uneven observation of their candidate supports. Detection count favors often-observed supports, whether or not the object is usually there, while PredictiveGraphs-style memory emphasizes the temporal state of established object--support relations; \ECROM{} instead keeps partial or missed observations from giving overly strong negative evidence.

For headphones and cupcakes, all three methods perform similarly, consistent with preferred locations that most mapping traversals observed. Luncheon meat is the hardest query: \ECROM{} finds it in two of three trials, compared with none for detection count and one for PredictiveGraphs-style memory; its one failure reaches the correct support, but the detector does not fire on the target. This pattern matches simulation, where the gain from opportunity is largest for the least observed queries (Section~\ref{sec:ablation}).

\section{Limitations}
\label{sec:limitations}

Our observation model makes two approximations. First, the exposure $c_{je}$ is a heuristic geometric proxy for the probability that a present object is observable: it equates observing half of a patch with a one-in-two chance of seeing the object, whereas this chance also depends on where on or near the patch the object rests, its size, the viewing distance and angle, and occlusion. Second, one likelihood ratio is shared by all queries, view counts and exposure levels, although weak responses carry more negative evidence when many views observed a support (Appendix~\ref{app:calibration}). A learned visibility model and a likelihood ratio conditioned on the observation process are directions for future work. Supports are geometric patches without receptacle semantics, which could add prior structure when observations are sparse. We also assume a stationary prevalence, whereas routines may change over longer deployments. Finally, the planner is greedy rather than optimizing the full route.

\section{Conclusion}

We introduced \ECROM{}, a retrospective open-vocabulary memory that estimates where objects usually occur from repeated, uneven observations. By separating semantic evidence from observation opportunity, \ECROM{} gives better long-term prevalence estimates and more effective search priors, in simulation and on a real robot, and the ablations show that both opportunity weighting and graded detector evidence contribute.

\bibliography{refs}

\begin{thebibliography}{29}
\providecommand{\natexlab}[1]{#1}
\providecommand{\url}[1]{\texttt{#1}}
\expandafter\ifx\csname urlstyle\endcsname\relax
  \providecommand{\doi}[1]{doi: #1}\else
  \providecommand{\doi}{doi: \begingroup \urlstyle{rm}\Url}\fi

\bibitem[Anderson et~al.(2018)Anderson, Chang, Chaplot, Dosovitskiy, Gupta,
  Koltun, Kosecka, Malik, Mottaghi, Savva, and Zamir]{anderson2018evaluation}
Peter Anderson, Angel Chang, Devendra~Singh Chaplot, Alexey Dosovitskiy,
  Saurabh Gupta, Vladlen Koltun, Jana Kosecka, Jitendra Malik, Roozbeh
  Mottaghi, Manolis Savva, and Amir~R. Zamir.
\newblock On evaluation of embodied navigation agents.
\newblock \emph{arXiv preprint arXiv:1807.06757}, 2018.

\bibitem[Batra et~al.(2020)Batra, Gokaslan, Kembhavi, Maksymets, Mottaghi,
  Savva, Toshev, and Wijmans]{batra2020objectnav}
Dhruv Batra, Aaron Gokaslan, Aniruddha Kembhavi, Oleksandr Maksymets, Roozbeh
  Mottaghi, Manolis Savva, Alexander Toshev, and Erik Wijmans.
\newblock {ObjectNav} revisited: On evaluation of embodied agents navigating to
  objects.
\newblock \emph{arXiv preprint arXiv:2006.13171}, 2020.

\bibitem[Chaplot et~al.(2020)Chaplot, Gandhi, Gupta, and
  Salakhutdinov]{chaplot2020semexp}
Devendra~Singh Chaplot, Dhiraj Gandhi, Abhinav Gupta, and Ruslan Salakhutdinov.
\newblock Object goal navigation using goal-oriented semantic exploration.
\newblock In \emph{Advances in Neural Information Processing Systems
  (NeurIPS)}, volume~33, pages 4247--4258, 2020.

\bibitem[Chen et~al.(2026)Chen, Kumar, Xu, Pavlakos, Biswas, and
  Mart{\'i}n-Mart{\'i}n]{chen2026star}
Taijing Chen, Sateesh Kumar, Junhong Xu, Georgios Pavlakos, Joydeep Biswas, and
  Roberto Mart{\'i}n-Mart{\'i}n.
\newblock Searching in space and time: Unified memory-action loops for
  open-world object retrieval.
\newblock In \emph{IEEE International Conference on Robotics and Automation
  (ICRA)}, 2026.

\bibitem[Deitke et~al.(2022)Deitke, VanderBilt, Herrasti, Weihs, Ehsani,
  Salvador, Han, Kolve, Kembhavi, and Mottaghi]{deitke2022procthor}
Matt Deitke, Eli VanderBilt, Alvaro Herrasti, Luca Weihs, Kiana Ehsani, Jordi
  Salvador, Winson Han, Eric Kolve, Aniruddha Kembhavi, and Roozbeh Mottaghi.
\newblock {ProcTHOR}: Large-scale embodied {AI} using procedural generation.
\newblock In \emph{Advances in Neural Information Processing Systems
  (NeurIPS)}, volume~35, pages 5982--5994, 2022.
\newblock \doi{10.52202/068431-0433}.

\bibitem[Downs et~al.(2022)Downs, Francis, Koenig, Kinman, Hickman, Reymann,
  McHugh, and Vanhoucke]{downs2022gso}
Laura Downs, Anthony Francis, Nate Koenig, Brandon Kinman, Ryan Hickman, Krista
  Reymann, Thomas~B. McHugh, and Vincent Vanhoucke.
\newblock {Google Scanned Objects}: A high-quality dataset of {3D} scanned
  household items.
\newblock In \emph{IEEE International Conference on Robotics and Automation
  (ICRA)}, pages 2553--2560, 2022.
\newblock \doi{10.1109/ICRA46639.2022.9811809}.

\bibitem[Gorlo et~al.(2026)Gorlo, Schmid, and Carlone]{gorlo2026daaam}
Nicolas Gorlo, Lukas Schmid, and Luca Carlone.
\newblock Describe anything anywhere at any moment.
\newblock In \emph{IEEE/CVF Conference on Computer Vision and Pattern
  Recognition (CVPR)}, pages 35002--35013, 2026.

\bibitem[Gu et~al.(2024)Gu, Kuwajerwala, Morin, Jatavallabhula, Sen, Agarwal,
  Rivera, Paul, Ellis, Chellappa, Gan, de~Melo, Tenenbaum, Torralba, Shkurti,
  and Paull]{gu2024conceptgraphs}
Qiao Gu, Alihusein Kuwajerwala, Sacha Morin, Krishna~Murthy Jatavallabhula,
  Bipasha Sen, Aditya Agarwal, Corban Rivera, William Paul, Kirsty Ellis, Rama
  Chellappa, Chuang Gan, Celso~Miguel de~Melo, Joshua~B. Tenenbaum, Antonio
  Torralba, Florian Shkurti, and Liam Paull.
\newblock {ConceptGraphs}: Open-vocabulary {3D} scene graphs for perception and
  planning.
\newblock In \emph{IEEE International Conference on Robotics and Automation
  (ICRA)}, pages 5021--5028, 2024.
\newblock \doi{10.1109/ICRA57147.2024.10610243}.

\bibitem[Hu et~al.(2026)Hu, Zheng, Zha, Xing, Singh, Hossain, Loquercio, and
  Shah]{hu2026raven}
Yixun Hu, Zhicheng Zheng, Lihan Zha, Chunwei Xing, Rajdeep Singh, Omar Hossain,
  Antonio Loquercio, and Dhruv Shah.
\newblock {RAVEN}: Long-horizon reasoning \& navigation with a
  visuo-spatio-temporal memory.
\newblock \emph{arXiv preprint arXiv:2606.25206}, 2026.

\bibitem[Huang et~al.(2023)Huang, Mees, Zeng, and Burgard]{huang2023vlmaps}
Chenguang Huang, Oier Mees, Andy Zeng, and Wolfram Burgard.
\newblock Visual language maps for robot navigation.
\newblock In \emph{IEEE International Conference on Robotics and Automation
  (ICRA)}, pages 10608--10615, 2023.
\newblock \doi{10.1109/ICRA48891.2023.10160969}.

\bibitem[Kadane(1971)]{kadane1971optimal}
Joseph~B. Kadane.
\newblock Optimal whereabouts search.
\newblock \emph{Operations Research}, 19\penalty0 (4):\penalty0 894--904, 1971.
\newblock \doi{10.1287/opre.19.4.894}.

\bibitem[Khanna et~al.(2024)Khanna, Ramrakhya, Chhablani, Yenamandra, Gervet,
  Chang, Kira, Chaplot, Batra, and Mottaghi]{khanna2024goat}
Mukul Khanna, Ram Ramrakhya, Gunjan Chhablani, Sriram Yenamandra, Theophile
  Gervet, Matthew Chang, Zsolt Kira, Devendra~Singh Chaplot, Dhruv Batra, and
  Roozbeh Mottaghi.
\newblock {GOAT-Bench}: A benchmark for multi-modal lifelong navigation.
\newblock In \emph{IEEE/CVF Conference on Computer Vision and Pattern
  Recognition (CVPR)}, pages 16373--16383, 2024.
\newblock \doi{10.1109/CVPR52733.2024.01549}.

\bibitem[Krajn{\'\i}k et~al.(2017)Krajn{\'\i}k, Fentanes, Santos, and
  Duckett]{krajnik2017fremen}
Tom{\'a}{\v{s}} Krajn{\'\i}k, Jaime~P. Fentanes, Jo{\~a}o~M. Santos, and Tom
  Duckett.
\newblock {FreMEn}: Frequency map enhancement for long-term mobile robot
  autonomy in changing environments.
\newblock \emph{IEEE Transactions on Robotics}, 33\penalty0 (4):\penalty0
  964--977, 2017.
\newblock \doi{10.1109/TRO.2017.2665664}.

\bibitem[Lee et~al.(2026)Lee, Jang, Heo, Choi, and Kim]{lee2026obsgraph}
Taekbeom Lee, Youngseok Jang, Jeonghwa Heo, Jeongjun Choi, and H.~Jin Kim.
\newblock {ObsGraph}: Hierarchical observation representation for embodied
  reasoning and exploration.
\newblock \emph{arXiv preprint arXiv:2606.24068}, 2026.

\bibitem[Liu et~al.(2025)Liu, Guo, Warke, Chintala, Paxton, Shafiullah, and
  Pinto]{liu2025dynamem}
Peiqi Liu, Zhanqiu Guo, Mohit Warke, Soumith Chintala, Chris Paxton, Nur
  Muhammad~Mahi Shafiullah, and Lerrel Pinto.
\newblock {DynaMem}: Online dynamic spatio-semantic memory for open world
  mobile manipulation.
\newblock In \emph{IEEE International Conference on Robotics and Automation
  (ICRA)}, pages 13346--13355, 2025.
\newblock \doi{10.1109/ICRA55743.2025.11127619}.

\bibitem[Minderer et~al.(2023)Minderer, Gritsenko, and
  Houlsby]{minderer2023owlv2}
Matthias Minderer, Alexey Gritsenko, and Neil Houlsby.
\newblock Scaling open-vocabulary object detection.
\newblock In \emph{Advances in Neural Information Processing Systems
  (NeurIPS)}, volume~36, pages 72983--73007, 2023.
\newblock \doi{10.52202/075280-3191}.

\bibitem[Ramakrishnan et~al.(2021)Ramakrishnan, Gokaslan, Wijmans, Maksymets,
  Clegg, Turner, Undersander, Galuba, Westbury, Chang, Savva, Zhao, and
  Batra]{ramakrishnan2021hm3d}
Santhosh~K. Ramakrishnan, Aaron Gokaslan, Erik Wijmans, Oleksandr Maksymets,
  Alexander Clegg, John Turner, Eric Undersander, Wojciech Galuba, Andrew
  Westbury, Angel~X. Chang, Manolis Savva, Yili Zhao, and Dhruv Batra.
\newblock {Habitat-Matterport} {3D} dataset ({HM3D}): 1000 large-scale {3D}
  environments for embodied {AI}.
\newblock In \emph{Proceedings of the Neural Information Processing Systems
  Track on Datasets and Benchmarks}, volume~1, 2021.

\bibitem[Ramakrishnan et~al.(2022)Ramakrishnan, Chaplot, Al-Halah, Malik, and
  Grauman]{ramakrishnan2022poni}
Santhosh~K. Ramakrishnan, Devendra~Singh Chaplot, Ziad Al-Halah, Jitendra
  Malik, and Kristen Grauman.
\newblock {PONI}: Potential functions for {ObjectGoal} navigation with
  interaction-free learning.
\newblock In \emph{IEEE/CVF Conference on Computer Vision and Pattern
  Recognition (CVPR)}, pages 18868--18878, 2022.
\newblock \doi{10.1109/CVPR52688.2022.01832}.

\bibitem[Saavedra-Ruiz et~al.(2025)Saavedra-Ruiz, Nashed, Gauthier, and
  Paull]{saavedraruiz2025perpetua}
Miguel Saavedra-Ruiz, Samer~B. Nashed, Charlie Gauthier, and Liam Paull.
\newblock Perpetua: Multi-hypothesis persistence modeling for semi-static
  environments.
\newblock In \emph{IEEE/RSJ International Conference on Intelligent Robots and
  Systems (IROS)}, pages 13843--13850, 2025.
\newblock \doi{10.1109/IROS60139.2025.11247086}.

\bibitem[Saavedra-Ruiz et~al.(2026)Saavedra-Ruiz, Gauthier, Gupta, Shahfar,
  Ellis, Parkison, and Paull]{saavedraruiz2026predictivegraphs}
Miguel Saavedra-Ruiz, Charlie Gauthier, Kumaraditya Gupta, Shima Shahfar,
  Kirsty Ellis, Steven Parkison, and Liam Paull.
\newblock Predictive spatio-temporal scene graphs for semi-static scenes.
\newblock \emph{arXiv preprint arXiv:2605.00121}, 2026.

\bibitem[Savva et~al.(2019)Savva, Kadian, Maksymets, Zhao, Wijmans, Jain,
  Straub, Liu, Koltun, Malik, Parikh, and Batra]{savva2019habitat}
Manolis Savva, Abhishek Kadian, Oleksandr Maksymets, Yili Zhao, Erik Wijmans,
  Bhavana Jain, Julian Straub, Jia Liu, Vladlen Koltun, Jitendra Malik, Devi
  Parikh, and Dhruv Batra.
\newblock Habitat: A platform for embodied {AI} research.
\newblock In \emph{IEEE/CVF International Conference on Computer Vision
  (ICCV)}, pages 9338--9346, 2019.
\newblock \doi{10.1109/ICCV.2019.00943}.

\bibitem[Stone(1975)]{stone1975theory}
Lawrence~D. Stone.
\newblock \emph{Theory of Optimal Search}, volume 118 of \emph{Mathematics in
  Science and Engineering}.
\newblock Academic Press, New York, 1975.
\newblock \doi{10.1016/S0076-5392(08)X6033-3}.

\bibitem[Wald et~al.(2019)Wald, Avetisyan, Navab, Tombari, and
  Nie{\ss}ner]{wald2019rio}
Johanna Wald, Armen Avetisyan, Nassir Navab, Federico Tombari, and Matthias
  Nie{\ss}ner.
\newblock {RIO}: {3D} object instance re-localization in changing indoor
  environments.
\newblock In \emph{IEEE/CVF International Conference on Computer Vision
  (ICCV)}, pages 7657--7666, 2019.
\newblock \doi{10.1109/ICCV.2019.00775}.

\bibitem[Wang and Soh(2023)]{wang2023smallobj}
Jiaming Wang and Harold Soh.
\newblock Small object navigation with context information.
\newblock \emph{Proceedings of the AAAI Symposium Series}, 1\penalty0
  (1):\penalty0 117--120, 2023.
\newblock \doi{10.1609/aaaiss.v1i1.27487}.

\bibitem[Wang and Soh(2024)]{wang2024polo}
Jiaming Wang and Harold Soh.
\newblock Probable object location ({POLo}) score estimation for efficient
  object goal navigation.
\newblock In \emph{IEEE International Conference on Robotics and Automation
  (ICRA)}, pages 5221--5227, 2024.
\newblock \doi{10.1109/ICRA57147.2024.10610671}.

\bibitem[Wang et~al.(2026)Wang, Chen, Liu, Ghotavadekar, Da, K{\"a}stner, and
  Soh]{wang2026cross}
Jiaming Wang, Jizhuo Chen, Diwen Liu, Atharva Ghotavadekar, Jiaxuan Da, Linh
  K{\"a}stner, and Harold Soh.
\newblock Change-robust online spatial-semantic topological mapping.
\newblock \emph{arXiv preprint arXiv:2605.02227}, 2026.

\bibitem[Yadav et~al.(2023)Yadav, Ramrakhya, Ramakrishnan, Gervet, Turner,
  Gokaslan, Maestre, Chang, Batra, Savva, Clegg, and Chaplot]{yadav2023hm3dsem}
Karmesh Yadav, Ram Ramrakhya, Santhosh~Kumar Ramakrishnan, Theo Gervet, John
  Turner, Aaron Gokaslan, Noah Maestre, Angel~Xuan Chang, Dhruv Batra, Manolis
  Savva, Alexander~William Clegg, and Devendra~Singh Chaplot.
\newblock {Habitat-Matterport} {3D} semantics dataset.
\newblock In \emph{IEEE/CVF Conference on Computer Vision and Pattern
  Recognition (CVPR)}, pages 4927--4936, 2023.
\newblock \doi{10.1109/CVPR52729.2023.00477}.

\bibitem[Yan et~al.(2025)Yan, Li, Wang, Wu, Wang, Zhu, Chen, and
  Liu]{yan2025dovsg}
Zhijie Yan, Shufei Li, Zuoxu Wang, Lixiu Wu, Han Wang, Jun Zhu, Lijiang Chen,
  and Jihong Liu.
\newblock Dynamic open-vocabulary {3D} scene graphs for long-term
  language-guided mobile manipulation.
\newblock \emph{IEEE Robotics and Automation Letters}, 10\penalty0
  (5):\penalty0 4252--4259, 2025.
\newblock \doi{10.1109/LRA.2025.3547643}.

\bibitem[Zhai et~al.(2023)Zhai, Mustafa, Kolesnikov, and Beyer]{zhai2023siglip}
Xiaohua Zhai, Basil Mustafa, Alexander Kolesnikov, and Lucas Beyer.
\newblock Sigmoid loss for language image pre-training.
\newblock In \emph{IEEE/CVF International Conference on Computer Vision
  (ICCV)}, pages 11941--11952, 2023.
\newblock \doi{10.1109/ICCV51070.2023.01100}.

\end{thebibliography}
\bibliographystyle{plainnat}

\clearpage
\appendix

\section{Derivations for the Censored Observation Model}
\label{app:derivation}

This appendix derives the opportunity-weighted likelihood of
Proposition~\ref{prop:likelihood}, the prevalence posterior of
Proposition~\ref{prop:posterior}, and several useful consequences of the
model. For readability, we first suppress the support, episode, and query
indices.

\subsection{Derivation of the opportunity-weighted likelihood}
\label{app:likelihood-derivation}

Recall the latent model
\begin{equation}
X\mid\theta\sim\Ber(\theta),
\qquad
V\mid c\sim\Ber(c),
\qquad
Z=XV,
\end{equation}
where $X$ denotes physical presence, $V$ whether the support is observable,
and $Z$ whether the queried object is both present and observable. We assume
$X\perp V$ conditioned on $\theta$ and $c$. Therefore
\begin{equation}
\Prob(Z=1\mid\theta,c)
=
\Prob(X=1\mid\theta)\Prob(V=1\mid c)
=
\theta c,
\end{equation}
and hence
\begin{equation}
\Prob(Z=0\mid\theta,c)=1-\theta c.
\end{equation}

The detector score $r$ follows the class-conditional distributions
\begin{equation}
p(r\mid Z=z)=p_z(r),
\qquad z\in\{0,1\}.
\end{equation}
Marginalizing the latent visible-presence state gives
\begin{align}
p(r\mid\theta,c)
&=
\sum_{z\in\{0,1\}}
p(r\mid Z=z)\Prob(Z=z\mid\theta,c) \\
&=
(1-\theta c)p_0(r)+\theta c\,p_1(r).
\end{align}
Using the likelihood ratio
\begin{equation}
\Lambda(r)=\frac{p_1(r)}{p_0(r)},
\end{equation}
we obtain
\begin{equation}
p(r\mid\theta,c)
=
p_0(r)
\left[
1+\theta c\bigl(\Lambda(r)-1\bigr)
\right],
\end{equation}
which is Eq.~\ref{eq:likelihood} and proves
Proposition~\ref{prop:likelihood}.

This form makes the roles of the two measurements explicit:
$\Lambda(r)$ determines whether the observation favors presence or absence,
while $c$ determines how strongly that evidence should affect prevalence.

\subsection{Derivation of the prevalence posterior}
\label{app:posterior-derivation}

Returning to the full indices, let
\begin{equation}
\mathcal H_j=\{(r_{je},c_{je})\}_{e=1}^{E}
\end{equation}
be the history associated with support $j$, and place the prior
\begin{equation}
\theta_j\sim\BetaDist(\alpha_0,\beta_0).
\end{equation}
Assuming episodes are conditionally independent given $\theta_j$ and their
recorded opportunities, and writing $\Lambda_{je}=\Lambda(r_{je})$,
\begin{align}
p(\theta_j\mid\mathcal H_j,q)
&\propto
p(\theta_j)
\prod_{e=1}^{E}
p(r_{je}\mid\theta_j,c_{je}) \\
&\propto
\theta_j^{\alpha_0-1}
(1-\theta_j)^{\beta_0-1}
\prod_{e=1}^{E}
p_0(r_{je})
\left[
1+\theta_jc_{je}(\Lambda_{je}-1)
\right].
\end{align}
The factors $p_0(r_{je})$ do not depend on $\theta_j$ and are absorbed into
the normalizing constant, leaving
\begin{equation}
p(\theta_j\mid\mathcal H_j,q)
\propto
\theta_j^{\alpha_0-1}
(1-\theta_j)^{\beta_0-1}
\prod_{e=1}^{E}
\left[
1+\theta_jc_{je}(\Lambda_{je}-1)
\right],
\end{equation}
which is Eq.~\ref{eq:posterior} and proves
Proposition~\ref{prop:posterior}.

\subsection{Episode-level presence}
\label{app:episode-presence}

The prevalence posterior answers where the object \emph{usually} occurs.
The same model can also estimate whether it was present in a particular
episode.

Conditioned on $X=1$, the support is observable with probability $c$, giving
\begin{align}
p(r\mid X=1,c)
&=
(1-c)p_0(r)+c\,p_1(r) \\
&=
p_0(r)\left[(1-c)+c\Lambda(r)\right].
\end{align}
Bayes' rule therefore gives
\begin{equation}
\Prob(X=1\mid r,\theta,c)
=
\frac{
\theta\left[(1-c)+c\Lambda(r)\right]
}{
1+\theta c(\Lambda(r)-1)
}.
\label{eq:gamma-theta}
\end{equation}
Integrating over the inferred prevalence yields
\begin{equation}
\Prob(X_{je}=1\mid\mathcal H_j,q)
=
\int_0^1
\Prob(X_{je}=1\mid r_{je},\theta_j,c_{je})
\,p(\theta_j\mid\mathcal H_j,q)\,
d\theta_j.
\end{equation}
Thus the model supports both retrospective episode queries
(``where was it then?'') and long-term prevalence queries
(``where is it usually?'') without changing the stored representation.

\subsection{Special cases and invariances}
\label{app:model-properties}

\paragraph{No observation opportunity.}
If $c_{je}=0$, then
\begin{equation}
1+\theta_jc_{je}(\Lambda_{je}-1)=1
\end{equation}
for every $\theta_j$. The episode therefore contributes no information,
regardless of its detector score. This is the censoring property: an
unobserved support is not treated as an observed absence.

\paragraph{Positive, negative, and neutral semantic evidence.}
For $c_{je}>0$, the likelihood factor
\begin{equation}
1+\theta_jc_{je}(\Lambda_{je}-1)
\end{equation}
increases with $\theta_j$ when $\Lambda_{je}>1$ and decreases with
$\theta_j$ when $\Lambda_{je}<1$. When $\Lambda_{je}=1$, it equals one and
the detector observation is uninformative. In all cases, its departure from
one vanishes continuously as $c_{je}\rightarrow0$.

\paragraph{Full opportunity and a perfect detector.}
Suppose $c_{je}=1$ and the observation is a perfect binary detection
$D_{je}=X_{je}$. Then
\begin{equation}
p(D_{je}\mid\theta_j)
=
\theta_j^{D_{je}}
(1-\theta_j)^{1-D_{je}},
\end{equation}
and the posterior reduces to the standard Beta--Bernoulli update
\begin{equation}
\theta_j\mid\{D_{je}\}_{e=1}^{E}
\sim
\BetaDist
\left(
\alpha_0+\sum_e D_{je},
\;
\beta_0+E-\sum_e D_{je}
\right).
\end{equation}
Thus the proposed model generalizes ordinary prevalence estimation to
partial observation and uncertain semantic evidence.

\paragraph{Perfect detector with varying opportunity.}
If a perfect binary sensor reports visible presence,
$D_{je}=Z_{je}$, then
\begin{equation}
D_{je}\sim\Ber(\theta_jc_{je}).
\end{equation}
Consequently,
\begin{equation}
\widehat{\theta}_{j,\mathrm{opp}}
=
\frac{\sum_e D_{je}}{\sum_e c_{je}}
\end{equation}
is conditionally unbiased whenever $\sum_e c_{je}>0$, because
\begin{equation}
\E\left[\sum_eD_{je}\,\middle|\,\{c_{je}\}_e\right]
=
\theta_j\sum_ec_{je};
\end{equation}
at finite sample size the unconstrained ratio can exceed one, and clipping it to $[0,1]$ introduces bias, which is one reason to prefer the posterior of Eq.~\ref{eq:posterior} over the ratio. With a detector thresholded at $\tau=.25$ in place of the perfect sensor, this ratio is the opportunity-normalized count of Table~\ref{tab:sim-results}.
In contrast, normalizing by the number of visits estimates detections per
visit and is biased whenever visiting a place does not imply observing the
support.

\paragraph{Frame duplication.}
Our episode-level quantities are invariant to duplicated frames:
$c_{je}$ is computed from the union of observed surface voxels and
$r_{je}$ is the maximum localized score within the episode. Repeating an
identical frame therefore changes neither quantity and hence leaves the
posterior unchanged. Distinct views do change the distribution of the maximum; Table~\ref{tab:lr-audit} measures this on the calibration objects.

\section{Motivation for the Probability-per-Cost Index}
\label{app:planner}

The index rule of Eq.~\ref{eq:index} is motivated by a simplified search problem in which it is exactly optimal. We state that problem, give the classical interchange argument~\citep{kadane1971optimal,stone1975theory}, and then discuss which of its assumptions our navigation setting violates. In our setting the rule is therefore a greedy heuristic rather than an optimal policy; we use it for its simplicity and leave a globally optimal search planner to future work.

\paragraph{Simplified setting.} Views $u\in\mathcal U$ are executed at most once; view $u$ has a fixed cost $C_u>0$ that does not depend on the order of execution, the weights $b_j$ form a distribution over mutually exclusive target locations, and a single inspection of $u$ discovers the target with probability $\rho_u=p_{\mathrm{det}}\sum_jb_jc_j(u)$ evaluated at the initial belief, independently across views. A false alarm does not end the search and therefore does not enter $\rho_u$. A schedule $\sigma=(u_1,u_2,\dots)$ has expected cost until detection $J(\sigma)=\sum_{t\ge1}C_{u_t}\prod_{s<t}(1-\rho_{u_s})$.

\begin{proposition}[Index rule]
Under the simplified setting, the schedule that sorts views by decreasing $\rho_u/C_u$ minimizes $J$, and since $\rho_u=p_{\mathrm{det}}\sum_jb_jc_j(u)$ with $p_{\mathrm{det}}$ common to all views, this is the order of Eq.~\ref{eq:index}.
\end{proposition}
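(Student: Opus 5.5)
The plan is the classical adjacent-interchange argument. Fix any schedule $\sigma$ and two consecutive positions $t$ and $t+1$ holding views $u$ and $v$. Let $P=\prod_{s<t}(1-\rho_{u_s})$ be the probability that the search is still running when position $t$ is reached. I will isolate the part of $J(\sigma)$ that depends on the order of $u$ and $v$.

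Terms with index before $t$ do not involve $u$ or $v$. Terms with index after $t+1$ carry the common factor $P(1-\rho_u)(1-\rho_v)$, which is symmetric in $u$ and $v$. Swapping $u$ and $v$ therefore changes only the two middle terms. With $u$ first these contribute $P\,[C_u+(1-\rho_u)C_v]$; with $v$ first they contribute $P\,[C_v+(1-\rho_v)C_u]$. Their difference is $P\,(\rho_v C_u-\rho_u C_v)$. Because all $C>0$, placing $u$ first is no worse exactly when $\rho_u/C_u\ge\rho_v/C_v$.

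The next step passes from this local comparison to global optimality.

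\emph{Finite $\mathcal U$.} Any schedule that is not sorted by decreasing $\rho/C$ contains an adjacent inversion. Swapping it does not increase $J$ and strictly reduces the number of inversions. Repeating, every schedule can be transformed into the sorted one without increasing cost, so the sorted schedule attains the minimum. Ties in $\rho/C$ give equal cost under any order among the tied views.

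\emph{Which schedules are compared.} To be careful, I will restrict to schedules that list every view once, or that stop early. The interchange argument is unaffected if a view with $\rho_u=0$ is appended: it only adds a nonnegative cost. A possibly infinite $\mathcal U$ would need a truncation and limit argument, but views here are finite, so I will just state finiteness.

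\emph{Identification with Eq.~\ref{eq:index}.} Finally, $\rho_u/C_u=p_{\mathrm{det}}\sum_j b_j c_j(u)/C_u$ and $p_{\mathrm{det}}>0$ is common to all views, so sorting by $\rho_u/C_u$ is the same as sorting by $\sum_j b_j c_j(u)/C_u$. In the simplified setting $C_u$ plays the role of $C_G(\mathrm{cur},u)+d_0$, taken as fixed. This gives exactly the order of Eq.~\ref{eq:index}.

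The main subtlety I expect is checking that the tail factor after position $t+1$ is genuinely swap-invariant. This relies on the simplified assumptions that the $\rho_u$ are fixed at the initial belief and independent across views, rather than updated by Eq.~\ref{eq:update}. I will state that dependence explicitly, since it is precisely what fails in the real setting.
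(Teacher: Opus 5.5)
Your proof is essentially the paper's: the same adjacent swap, the same observation that only the two middle terms change because the tail factor $(1-\rho_u)(1-\rho_v)$ is symmetric, and the same difference $P(\rho_vC_u-\rho_uC_v)$; your global step (any unsorted schedule reaches the sorted one through swaps that never increase $J$, which also covers ties and $P=0$) is slightly more careful than the paper's ``repeated exchanges sort the schedule.'' Drop the clause admitting schedules that ``stop early'': the comparison has to be among orderings of all of $\mathcal U$, because deleting views from the end of a schedule removes nonnegative terms from $J$, so truncated schedules would otherwise beat the sorted one.
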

\emph{Proof.} Take a schedule in which view $a$ immediately precedes view $b$ with $\rho_a/C_a<\rho_b/C_b$, and let $P$ be the probability of reaching this pair undetected. Swapping them changes $J$ only through the terms $P[C_a+(1-\rho_a)C_b]$ versus $P[C_b+(1-\rho_b)C_a]$; their difference is $P(\rho_bC_a-\rho_aC_b)=P\,C_aC_b\,(\rho_b/C_b-\rho_a/C_a)>0$, so the swap strictly lowers $J$. Repeated exchanges sort the schedule and no further exchange improves it. $\square$

\paragraph{What the navigation setting violates.} Four assumptions fail in Section~\ref{sec:planner}. First, a single placement can lie within 45\,cm of several adjacent supports, so the normalized prevalences $b_j$ are search weights rather than a distribution over exclusive locations, and $p_{\mathrm{det}}\sum_jb_jc_j(u)$ is not the probability that view $u$ finds the target. Second, the cost $C(\mathrm{cur},u)=C_G(\mathrm{cur},u)+d_0$ depends on the robot's current node, so view costs are not order independent. Third, a view exposes several supports at once, so detection probabilities of different views are coupled through the shared belief and the update of Eq.~\ref{eq:update} changes the ratios $\rho_u/C_u$ of the remaining views after every observation. Fourth, a detection is followed by verification, and a rejected trigger costs the verification without ending the search. For these reasons we use $\rho_u/C_u$ as a greedy heuristic, recomputed from the current weights and current position after every executed view, rather than as a precomputed schedule.

\section{Compared Memories: Formulations and Implementation}
\label{app:baselines}

\paragraph{Common inputs.}
Every compared memory is re-implemented on the same stored keyframes, the same surface-patch supports $j$, and the same OWLv2 scores, so the methods differ only in the statistic they compute from the history. Stored frame $n$ carries its query score $r_n$, its support $j(n)$ (the support whose surface its localized detection was lifted to; frames without a localized detection have none), its place, and its episode $e(n)$. The episode evidence $r_{je}=\max_{n:\,j(n)=j,\,e(n)=e}r_n$ is shared by all support-level statistics; detections are $D_{je}=\ind[r_{je}\ge\tau]$ with $\tau=.25$ for every method; $v_{je}=1$ when episode $e$ stored at least one frame at the place of support $j$; $c_{je}$ is the geometric opportunity of Eq.~\ref{eq:exposure}; and $E$ is the number of mapping episodes. Retrospective estimates use all 20 mapping traversals of a house. For the ranking metrics of Table~\ref{tab:sim-results} (AP, Top-5, AUROC), every score vector is compared directly with the footprint-based ground truth on the same supports (Section~\ref{sec:sim}). Bernoulli NLL scores the unnormalized estimates of the memories whose output is a probability: \ECROM{} and its variants, visit frequency, the opportunity-normalized count, the PredictiveGraphs-style filter, and Coverage, whose constant score is the prior mean; detection counts and the RAVEN- and DynaMem-style scores are not probabilities and are not scored. For search, count, frequency, latest-state, and persistence estimates are clipped, normalized into search weights, and planned with the index rule of Eq.~\ref{eq:index} and the update of Eq.~\ref{eq:update} exactly as the \ECROM{} weights; the RAVEN-style scores are normalized and planned the same way; supports outside the retrieved set carry no mass, so this policy stops once the retrieved supports are inspected. Coverage plans the same rule on a uniform belief, i.e., it maximizes mean opportunity per meter; every belief-based policy stops when no remaining candidate exposes belief mass. The relation filters' duration mixtures, detector rates and hyperparameters are fitted on the calibration objects; every other baseline has the fixed parameters listed below, and none sees the evaluation queries.

\paragraph{Detection count.}
The score is
\begin{equation}
s_j^{\rm count}=\frac{\sum_eD_{je}}{\max_{j'}\sum_eD_{j'e}},
\end{equation}
the number of traversals in which the query was detected at support $j$, divided by its maximum over supports. It has no denominator for opportunity: $\E[\sum_eD_{je}]$ scales with both $\theta_j$ and $\sum_ec_{je}$.

\paragraph{Visit frequency.}
\begin{equation}
s_j^{\rm freq}=\frac{\sum_eD_{je}v_{je}}{\sum_ev_{je}}.
\end{equation}
This adds a denominator, but it is the wrong one whenever the robot visits a place without viewing support $j$. It estimates detection per visit rather than occurrence per useful look. It is the \emph{visit frequency} baseline of Table~\ref{tab:sim-results}.

\paragraph{Opportunity-normalized count.}
\begin{equation}
s_j^{\rm opp}=\min\Bigl(1,\frac{\sum_eD_{je}}{\sum_ec_{je}}\Bigr)\quad\text{if }\textstyle\sum_ec_{je}>0,\qquad s_j^{\rm opp}=\frac{\alpha_0}{\alpha_0+\beta_0}\ \text{otherwise}.
\end{equation}
This replaces the visit denominator with the geometric opportunity of Eq.~\ref{eq:exposure}; it is the perfect-sensor estimator of Appendix~\ref{app:model-properties} applied to thresholded detections, without a prior or graded evidence.

\paragraph{RAVEN-style retrieval.}
The memory ranks stored frames by open-vocabulary score, retrieves the top $K=30$, attaches pose and episode, and lets frame $n$ vote for its support with
\begin{equation}
s_j^{\rm ret}=\max_{n:\,j(n)=j} r_n\exp[-(E-e(n))/\tau_r],
\end{equation}
with recency measured in episodes from the most recent mapped traversal and $\tau_r=12$ episodes; frames without a localized support are not retrievable, and supports outside the top $K$ score zero. This is neither a count nor a posterior. It is a recency-weighted maximum designed to find a memorable historical observation. It ignores exposed non-retrievals; consequently it cannot distinguish one response in one opportunity from one response in one hundred opportunities.

\paragraph{DynaMem-style latest state.}
For each support, find the latest episode $e^*(j)$ whose opportunity reaches a threshold, $c_{je}\ge.5$, and keep
\begin{equation}
s_j^{\rm latest}=r_{j,e^*(j)},
\end{equation}
with score zero for supports never re-observed at that level. This approximates the frustum-based overwriting of a dynamic spatial memory: a region that is re-observed replaces its earlier content. It is useful for current state, but two histories with the same final view and radically different long-term frequencies receive the same score.

\paragraph{PredictiveGraphs-style relation filters (Perpetua$^\star$).}
This is the closest probabilistic baseline, so we re-implement its estimator rather than a simplification of it. PredictiveGraphs attaches a Perpetua$^\star$ filter to every object--receptacle edge of a scene graph~\citep{saavedraruiz2026predictivegraphs,saavedraruiz2025perpetua}. An edge is instantiated for a support only if the query produced at least one \emph{localized} accepted positive there within the traversals the estimator may use, never from held-out traversals or placement labels; a support with no edge carries the base occurrence rate of never-detected supports, measured on the calibration objects.

Each edge carries two survival-time models over the binary relation state, both updated with every valid observation: \emph{persistence} $M_P$, under which the relation holds until a survival time $T$, and \emph{emergence} $M_E$, under which it is absent until an emergence time $T_E$. Each is a mixture over duration hypotheses $\kappa_l$ with categorical prior $\nu_l$, and with binary detector emissions $P_M=\Prob(y=0\mid X=1)$, $P_F=\Prob(y=1\mid X=0)$ the conditional evidence of a component has the closed form
\begin{equation}
\Prob(\mathcal Y\mid \kappa_l)=\sum_{k}\Big[\textstyle\prod_{i\le k}\Prob(y_i\mid \text{pre})\prod_{i>k}\Prob(y_i\mid \text{post})\Big]\,\Prob\!\left(T\in[t_k-t_0,\,t_{k+1}-t_0)\mid \kappa_l\right),
\end{equation}
summed over the edge's own observation times with $t_0$ its first valid observation, which yields the posterior component weights $w_l\propto\nu_l\Prob(\mathcal Y\mid \kappa_l)$; each model predicts from its dominant component. The two are combined by Bayesian model selection against a switching prior $p_t(M)$ given by a schedule $f(t)$ and an annealing factor $\zeta(t)=\exp(-\eta(t-t_N))$, with $t_N$ the edge's last valid observation,
\begin{equation}
\begin{aligned}
\Prob(M\mid\mathcal Y,t)&\propto\Prob(\mathcal Y\mid M)^{\zeta(t)}p_t(M),\\
\Prob(X_t=1\mid\mathcal Y)&=\sum_{M}\Prob(M\mid\mathcal Y,t)\,\Prob(X_t=1\mid\mathcal Y,\kappa_{l^*},M),
\end{aligned}
\end{equation}
the sum running over $M_P$ and $M_E$, and with a forgetting factor $\gamma$ discounting the running measurement likelihood. An observation exists for a traversal when the support was exposed, $c_{je}\ge c_{\min}$, or produced an accepted positive, which overrides the gate; other traversals are missing and advance the clock without an update.

The estimator is the published one: our implementation reproduces the authors' reference code to $1.5\times10^{-5}$ in log evidence and $6.8\times10^{-7}$ in the predicted probability. Table~\ref{tab:pg-impl} separates what is inherited from the original model, what our setting requires us to adapt, and which quantities are fitted on the calibration objects. Three adaptations are required by our setting. First, the location units are the shared $0.5$\,m supports, not semantically grouped receptacles. Second, the clock is the traversal index, so $\eta$ is per traversal and $\gamma$ per update. Third, the switching prior is order zero, $f(t)=f_0$, obtained from the edge's own observed history after correcting the detection fraction for $P_M$ and $P_F$ and shrinking it towards the pooled calibration rate: the FreMEn and LLM schedules of the original method need periodic structure over weeks of hourly observations, and our 20 traversals with independently resampled placements contain none. Because the filter forecasts forward in time, the retrospective estimate compared with the ground-truth distribution is its mean prediction over five horizons after the last mapping traversal, and the search prior is its prediction one traversal after the last mapping traversal.

\begin{table}[t]
\caption{PredictiveGraphs-style baseline: what is inherited from the original model, what our setting requires, and what is fitted. All fitting uses the calibration objects only.}
\label{tab:pg-impl}
\centering
\small
\setlength{\tabcolsep}{5pt}
\begin{tabularx}{\linewidth}{>{\raggedright\arraybackslash}p{2.9cm}>{\raggedright\arraybackslash}X>{\raggedright\arraybackslash}X}
\toprule
Component & Original PredictiveGraphs (Perpetua$^\star$) & Our implementation on LOOP-Bench \\
\midrule
Relation state & binary object--receptacle relation & binary object--support relation on the shared $0.5$\,m supports \\
Edge instantiation & first accepted detection & first localized accepted positive ($r_{je}\ge.25$) within the mapping traversals; supports without an edge carry the base rate of never-detected supports \\
Temporal models & persistence $M_P$ and emergence $M_E$, log-normal duration mixtures, EM-fitted & unchanged; one to three components selected by AIC, fitted by the reference EM on the calibration histories with gaps preserved \\
Detector emissions & $P_M$, $P_F$ & unchanged; pooled rates of the calibration objects \\
Model selection & Bayesian selection with annealing $\eta$ and forgetting $\gamma$ & unchanged; clock is the traversal index, so $\eta$ is per traversal and $\gamma$ per update \\
Switching prior $f(t)$ & FreMEn / LLM schedules over weeks of hourly data & order zero $f_0$ from the edge's own corrected detection fraction, shrunk towards the pooled rate (placements are independent across the 20 traversals) \\
Missing observations & advance the clock without an update & unchanged; an observation exists when $c_{je}\ge c_{\min}$ or an accepted positive occurred \\
Hyperparameters $c_{\min},\gamma,\eta$ & -- & selected on the calibration objects by held-out AP subject to NLL within $5\%$ of its minimum \\
Output & next-state prediction & mean prediction over five horizons after the last mapping traversal (distribution metrics); one-step prediction (search prior) \\
\bottomrule
\end{tabularx}
\end{table}

\paragraph{Selected values.} The hyperparameters selected on the calibration objects are $c_{\min}=.1$, $\gamma=.95$ and $\eta=.2$ per traversal, with $\tau=.25$ and the $3$\,m detection range shared with every method. The detector emissions are the pooled rates of the calibration objects, $P_M=.709$ and $P_F=.0019$, for every query. Both duration mixtures select two log-normal components by AIC on $433$ calibration edges: persistence weights $(.52,.48)$, log-means $(1.20,2.40)$ and log-standard deviations $(.23,.55)$; emergence weights $(.81,.19)$, log-means $(-.71,5.01)$ and log-standard deviations $(.53,1.36)$. The order-zero switching prior of an edge is its detection fraction corrected for $P_M$ and $P_F$ and shrunk with two pseudo-counts towards the pooled occupancy $.164$; supports without an edge carry the base rate $.0020$ of never-detected supports; the retrospective estimate averages the forecasts at horizons one to five.

The remaining gap is the estimand: its latent variable is the survival of an established relation, not the stationary prevalence of independently re-placed instances. One consequence is visible in Table~\ref{tab:sim-results}: its candidate set is restricted to supports where the query was actually detected, which hold $.51$ of the ground-truth occurrence mass on average, so its AP is bounded by candidate construction rather than by the filter, whereas \ECROM{} can assign mass to supports that produced only weak positives.

\paragraph{Summary of the gap.}
Retrieval and latest-state methods keep \emph{which positive was strongest or most recent}; counts estimate \emph{positive evidence per acquisition unit}; persistence filters estimate \emph{the next state of an established relation}. \ECROM{} estimates \emph{physical occurrence per chance to observe}. Only the last quantity is simultaneously retrospective, sampling-aware, probabilistic, and directly usable as a long-run search prior.

\section{Benchmark Construction Details}
\label{app:benchmark}

We construct the benchmark in Habitat~\citep{savva2019habitat} using ten HM3D-Semantics homes. Each home contains 16--33 valid receptacle locations, including tables, desks, counters, islands, shelves, cabinets, nightstands, dressers, sofas, beds, benches, and stools. Each home receives eight household objects selected from screened Google Scanned Objects~\citep{downs2022gso}, 80 object--home pairs in total, of which three per home serve as calibration objects and five as evaluation queries (below).

\paragraph{Placement distributions.}
Each object is assigned a hidden distribution over valid receptacles, covering unimodal, bimodal, near-uniform, and rare/absent routines. Before every traversal, its location is sampled independently from this distribution and the object is settled using the simulator's physics. Placement is therefore independent across episodes conditioned on the object's long-term distribution.

\paragraph{Support-level ground truth.}
\label{app:support-gt}
For distribution evaluation, we represent each annotated placement location $k$ by a horizontal disc of radius 15\,cm centered at its receptacle-top position. We sample the disc on a 2\,cm grid and assign each sample to the support owning the nearest mapped surface voxel within 10\,cm horizontally and 25\,cm vertically. Let $n_{jk}$ be the number of samples assigned to support $j$ and define $A_{jk}=n_{jk}/\sum_\ell n_{\ell k}$ for a footprint with at least one assigned sample; otherwise its column is zero. Given the query's hidden placement probabilities $\pi_k$, its support mass is $w_j^\ast=\sum_k A_{jk}\pi_k$. This fixed footprint model splits placement mass across patch boundaries in proportion to the observed footprint area. The mean present-placement mass is .850 per query, of which .083 cannot be assigned; normalizing $w_j^\ast$ into $g_j$ conditions on the remaining mass. The same $g_j$ supplies the labels for AP and AUROC and the mass for Top-5.

\paragraph{Presence ground truth.} For the Bernoulli NLL, support $j$ holds placement $k$ ($m_{jk}=1$) when the placement lies within 45\,cm of a voxel of $\mathcal S_j$, the radius at which detections are attributed to supports and at which the calibration labels $Z_{je}$ are defined. The ground-truth prevalence is $\theta_j^\ast=\sum_k\pi_km_{jk}$. A placement can belong to several adjacent supports, so $\sum_j\theta_j^\ast$ averages 4.76 per query, whereas the footprint allocation sums to at most one.

\paragraph{Metric definitions.} Supports are ordered by decreasing score. $\mathrm{AP}=\frac{1}{N_+}\sum_k\mathrm{Prec}@k\,y_{(k)}$ with $N_+=\sum_jy_j$, where supports with equal scores enter the precision--recall curve together (one threshold per distinct score). Top-5 mass is $\sum_jg_j\,\omega_j$, where $\omega_j=1$ for supports ranked in the top five and supports tied at the cutoff receive their expected share. AUROC counts ties as one half. An all-zero score vector ties every support, which gives AP equal to the fraction of positive supports, Top-5 equal to $5/J$ and AUROC .5. For $\mathrm{NLL}_{\mathrm{Ber}}$, $\hat\theta_j$ is clipped to $[10^{-6},1-10^{-6}]$; scores that are not probabilities (detection count, RAVEN- and DynaMem-style) are not scored.

\paragraph{Footprint sensitivity.} With footprint radii of 10\,cm and 20\,cm instead of 15\,cm, \ECROM{} still has the highest AP (.199 and .239), Top-5 (.369 and .360) and AUROC (.767 and .759) of all rows of Table~\ref{tab:sim-results}.

\paragraph{Observation process.}
Observation is generated independently of object placement. Every traversal starts at the dock, moves with discrete simulator actions, and returns to the dock. Each home's 20 traversals mix 3--9 broad routes with 11--17 local patrols (200 traversals, 79,741 RGB-D frames and 9.01\,km in total). On a broad route, rooms are entered with a fixed per-house probability of .25, .5, or .8; within entered rooms, the route covers a random 45--75\% of candidate stations, and 40--70\% of visited stations are traversed without stopping. Camera behavior ranges from broad sweeps to sparse or single glances, with occasional close receptacle inspections from 1.5--3\,m. A local patrol is a short loop of at most 35\,m that visits two of the four stations nearest the dock and scans by turning in place. Consequently, the same support may be fully observed, partially observed, or missed entirely across different traversals.

This independence is deliberate: whether an object is present cannot cause the robot to visit or look toward its location. The benchmark can therefore measure whether a memory correctly separates object prevalence from sensing frequency.

\paragraph{Splits and evaluation protocol.}
Each home contains 20 mapping traversals and three in-distribution test traversals; the benchmark additionally provides two test traversals per home whose placement distributions are shifted, which are released with the benchmark but not used in this paper. Three objects per home are used only to select detector, calibration, and prior parameters; the remaining five are held out, resulting in 50 evaluation queries. Queries are supplied only after the mapping history has been recorded. All memories are built from the 20 mapping traversals. The distribution metrics compare each memory against the known placement distribution of the query. Each of the three test traversals contains one placement of every object sampled from that distribution, and each such placement is one search trial, so every query has exactly three trials and the 50 queries give the 150 trials of Table~\ref{tab:sim-results}.

Every method receives the same RGB-D streams and per-frame poses in a common coordinate frame and may construct its own metric or topological memory; for a fair comparison, our experiments give every memory the same supports and localized detections (Appendix~\ref{app:baselines}). Ground-truth object placements and receptacle annotations are withheld from the memory methods and used only for calibration on the designated objects and for evaluation. Distribution estimation is evaluated on the shared surface supports with footprint-based ground truth (Section~\ref{sec:sim}), while active search uses the same simulator, detector, travel budget, and starting state across methods.

\section{Representation and Implementation Details}
\label{app:impl}

The CROSS keyframe graph~\citep{wang2026cross} supplies graph topology, metric edge costs, and multi-session frame poses. Farthest-point seeding in graph distance followed by multi-source assignment creates connected places; each place has eight yaw bins, and these place--view nodes are the candidate actions and the cost structure of the search. A support is a $0.5\times0.5$\,m patch (per 0.5\,m height band) of 10\,cm up-facing depth voxels observed at least twice within 3\,m, built from the stored keyframe depth in the odometry frame of the traversals; $c_{je}$ is the fraction of the patch's voxels with a depth return in traversal $e$ within the detection range. Detector regions are lifted to 3D and assigned when their point lies within 45\,cm of reliable surface voxels. The score likelihood ratio $\Lambda$ is fitted once on the calibration objects (Appendix~\ref{app:calibration}): 13 histogram bins over the score with Laplace smoothing, pool-adjacent-violators monotonicity and a clipped log ratio $|\log\Lambda|\le6$. The posterior of Eq.~\ref{eq:posterior} is integrated on a composite grid of 400 nodes, 200 log-spaced on $(10^{-9},0.05)$ and 200 uniform on $[0.05,1)$, each node carrying the prior mass of its cell. This matters: the sparse Beta prior has an integrable singularity at zero that a uniform midpoint grid resolves poorly (with $\BetaDist(.05,8)$ the no-evidence posterior mean on a uniform 400-point grid is $.026$ against the exact $.0062$, so every never-observed support would carry about four times too much belief), whereas the composite grid reproduces the exact conjugate Beta--Bernoulli means and intervals. The prior $\BetaDist(\alpha_0,\beta_0)=\BetaDist(.05,8)$ is selected on the calibration objects under this quadrature by held-out presence AP subject to the held-out Bernoulli NLL staying within 5\% of its minimum. The planner's acquisition cost in Eq.~\ref{eq:index} is $d_0=3$\,m, selected on the calibration objects from $\{1,3\}$\,m; travel costs $C_G$ are shortest-path distances in the place graph, and a candidate view is admissible only while the budget still allows reaching it.

\paragraph{Runtime.} Query time is dominated by open-vocabulary detection. All results in the paper score every stored frame with OWLv2, which takes 80\,s per query on an NVIDIA GeForce RTX 5090 (1{,}719 stored frames per home on average, scored at 22.6 frames per second); given the detector evidence, the posterior takes 12\,ms per query and the index planner 0.6\,ms per search step. Detection can be made faster by first retrieving the frames most similar to the query with SigLIP~\citep{zhai2023siglip} and scoring only those with OWLv2. A frame that the detector did not process is then censored: it contributes no evidence, and the opportunity $c_{je}$ of every cell is recomputed from the depth of the processed frames only, so an unprocessed frame is neither a non-detection nor an observation. This trades accuracy for speed. Scoring half of each home's frames halves the query time to 40\,s and lowers AP from .216 to .191 and SR from .587 to .533; scoring a fixed 400 frames takes 19\,s but lowers SR to .480, and about half of this loss comes from the largest home, where the retrieved frames leave most surfaces with little opportunity.

\paragraph{Degenerate cases.} A support with no localized detection in an episode has $r_{je}=0$, never a missing value, and is evaluated by $\Lambda(0)$; a support with $\sum_ec_{je}=0$ keeps its prior mean under \ECROM{} and its binary-exposure and binary-sensor variants, whereas the no-exposure variant sets $c_{je}=1$ for every support in every traversal; an all-zero score vector ties every support in the ranking metrics; the visit-frequency score is zero when $\sum_ev_{je}=0$. During search the $120$\,m budget counts forward motion in $0.2$\,m steps, rotations are free, and the acquisition cost $d_0$ enters only the index.

\subsection{Calibrating the model: likelihood ratio, detection model, prior}
\label{app:calibration}

Besides the opportunity $c_{je}$, which is measured from the stored depth
and needs no calibration, the system uses three quantities that the
derivation does not determine: the score likelihood
ratio $\Lambda(r)$ of Eq.~\ref{eq:Lambda}, the detection probability
$p_{\mathrm{det}}$ of the online belief update
(Eq.~\ref{eq:update}), and the Beta prior $\BetaDist(\alpha_0,\beta_0)$.

Table~\ref{tab:calib-regimes} summarizes the two calibration regimes used
in the paper and where each is used; the ground-truth placements and
receptacle annotations of the evaluation queries enter neither of them.

\begin{table}[t]
\caption{Calibration regimes for the score likelihood ratio. Simulator labels exist only for the designated calibration objects; evaluation queries never receive them.}
\label{tab:calib-regimes}
\centering
\small
\setlength{\tabcolsep}{5pt}

\begin{tabularx}{\linewidth}{>{\raggedright\arraybackslash}p{2.0cm}>{\raggedright\arraybackslash}X>{\raggedright\arraybackslash}p{2.2cm}>{\raggedright\arraybackslash}p{2.6cm}}
\toprule
Setting & Source of $\Lambda$ & Labels & Used for \\
\midrule
Simulation & class histograms of all episode--support cells of the calibration objects, pooled over houses & simulator $Z_{je}$ & every \ECROM{} row of Table~\ref{tab:sim-results} \\
Robot & score-stratified sample of about 120 stored keyframes per labeled prompt, pooled over prompts; histograms shrunk towards the simulation histograms ($n_0=20$) & VLM, whole keyframe & Table~\ref{tab:realrobot} \\
\bottomrule
\end{tabularx}
\end{table}

\paragraph{Likelihood ratio in simulation.}
$\Lambda(r)=p_1(r)/p_0(r)$ is fitted once on the calibration objects, three per home, which are never evaluated. Every episode--support cell of a calibration object with a stored frame or a response contributes its response $r_{je}$, labeled with the simulator's $Z_{je}$: the object is present on support $j$ (the 45\,cm rule of Appendix~\ref{app:support-gt}) and visible with at least 50 pixels within 3\,m in at least one stored keyframe of episode $e$. A cell without a localized detection has $r_{je}=0$ and falls in the lowest score bin, so silence is calibrated together with the responses. The class histograms over 13 score bins with Laplace smoothing give $p_1$ and $p_0$; $\log\Lambda$ is made non-decreasing by pool-adjacent-violators and clipped to $|\log\Lambda|\le6$. The labels are thus the model's latent variable $Z_{je}$, not the correctness of a detector proposal. For the binary-sensor ablation of Section~\ref{sec:ablation}, the same histograms give hit and false-alarm rates $h_\tau=.506$ and $f_\tau=.0087$ at $\tau=.25$, and $h_\tau=.761$ and $f_\tau=.0351$ at $\tau=.15$.

\paragraph{Sensor-model audit.}
Table~\ref{tab:lr-audit} re-estimates $\log\Lambda$ on the calibration objects of LOOP-Bench in five score bins, separately for three ranges of the number of stored keyframes that observed the support in the episode and three ranges of the opportunity $c_{je}$. Scores of .15 and above are positive evidence in every stratum, and their log ratio varies by at most .65 nats across view counts. Weak responses depend more on the number of views: a response in $[.05,.15)$ is neutral when few keyframes observed the support ($+.09$) and negative when many did ($-1.13$), as expected for a maximum over more views of a visible object. Across opportunity levels the ordering of the bins is the same, and a response below .05 is weaker negative evidence at low opportunity ($-1.43$ against $-2.26$).

\begin{table}[t]
\caption{Sensor-model audit on the calibration objects: $\log\Lambda$ per score bin, estimated separately for three ranges of the number of stored keyframes that observed the support in the episode (views) and three ranges of the opportunity $c_{je}$, with half-count smoothing. The last row gives the number of visible cells ($Z_{je}=1$) in each stratum.}
\label{tab:lr-audit}
\centering
\small
\setlength{\tabcolsep}{4.5pt}
\begin{tabular}{lccccccc}
\toprule
& & \multicolumn{3}{c}{Views} & \multicolumn{3}{c}{Opportunity $c_{je}$} \\
\cmidrule(lr){3-5}\cmidrule(lr){6-8}
Score bin & All & $\le11$ & 12--19 & $\ge20$ & $<\frac13$ & $\frac13$--$\frac23$ & $\ge\frac23$ \\
\midrule
$[0,.05)$ & $-2.17$ & $-1.92$ & $-2.30$ & $-2.57$ & $-1.43$ & $-2.55$ & $-2.26$ \\
$[.05,.15)$ & $-0.26$ & $+0.09$ & $-0.34$ & $-1.13$ & $+0.26$ & $-0.39$ & $-0.37$ \\
$[.15,.25)$ & $+2.01$ & $+2.10$ & $+2.02$ & $+1.83$ & $+1.79$ & $+1.95$ & $+2.03$ \\
$[.25,.35)$ & $+3.13$ & $+3.29$ & $+2.99$ & $+2.85$ & $+4.50$ & $+3.28$ & $+2.94$ \\
$[.35,1]$ & $+5.03$ & $+4.76$ & $+4.94$ & $+5.41$ & $+5.07$ & $+4.70$ & $+5.09$ \\
\midrule
Visible cells & 440 & 196 & 134 & 110 & 50 & 74 & 316 \\
\bottomrule
\end{tabular}
\end{table}

\paragraph{Likelihood ratio on the robot.}
The robot has no simulator labels, so a single $\Lambda$ is fitted on its own stored keyframes of the mapping traversals, pooled over the labeled prompts. For each prompt, a score-stratified sample of about 120 keyframes (every keyframe scoring above $.10$, capped at 60, plus a random subset of the low-score keyframes) is labeled \emph{visible} or \emph{not visible} by a vision--language model (OpenAI GPT-6 via the Codex CLI, model \texttt{gpt-6-astra}, default decoding) with the prompt ``This is a photo taken by a robot's camera. Is there \{query\} visible in this image? It may be small or partly occluded, but must be identifiable as \{query\}. Answer with exactly one word: yes or no.'' Each keyframe carries the inverse sampling weight of its score bin, and the weighted class histograms are shrunk towards the simulation histograms with $n_0=20$ pseudo-counts per class before pool-adjacent-violators and clipping. 

\paragraph{Detection model of the online update.}
A live look is summarized by verified discovery, $Y=\ind[r\ge\tau]\cdot\ind[\text{verified}]$ with $\tau=.25$, and its rate given a visible target, $p_{\mathrm{det}}=\Prob(Y{=}1\mid\text{visible})$, is estimated per stored keyframe on the calibration objects of each house: the fraction of keyframes in which the object is visible (at least 50 pixels within the detection range) that would have ended the search, i.e., in which the detector fires and the verification criterion (at least 500 target pixels) holds; it ranges from .05, the floor of the estimate, to .73 across houses. The robot instead uses the raw detector event with the detector rates of its VLM-labeled keyframes (Appendix~\ref{app:realrobot}). A verified discovery cannot occur when the target is not visible, so the update has no false-alarm term. Applying the full
$\Lambda(r)$ of the memory to every live frame is not equivalent:
$\Lambda$ describes the maximum score of a traversal, not a single look.

\paragraph{Beta prior.}
$\alpha_0$ and $\beta_0$ are selected on the calibration objects by
two-fold cross-validation over their mapping traversals, maximizing
held-out presence AP subject to the held-out Bernoulli NLL staying within
5\% of its minimum, which gives $\BetaDist(.05,8)$ (prior mean about
$.0062$: a named object occupies a given surface patch in roughly one
traversal in 161). The exact values matter little. The prior is weak, so
the evidence of repeatedly observed supports dominates it, and it mainly
sets the belief of supports with little or no evidence, which stay at the
prior mean. What it must encode is only that a particular surface rarely
holds a particular object, $\alpha_0\ll\beta_0$. This is why the robot,
which has no presence labels to select the prior, can keep the
benchmark's values.

\section{Paired Uncertainty, Present-Target Search and Per-House Results}
\label{app:uncertainty}

Table~\ref{tab:paired} gives the paired difference between \ECROM{} and every other row of Table~\ref{tab:sim-results}, with 95\% intervals from a two-level bootstrap that resamples houses and then queries within each resampled house (20{,}000 resamples), so the dependence between queries of a house and between the three trials of a query is preserved. Table~\ref{tab:present} reports SR and SPL over the 128 present-target trials and the mean distance traveled on the 22 absent-target trials. On absent-target trials, detection count, visit frequency, the opportunity-normalized count and RAVEN-style retrieval stop after 58--81\,m because their scores are zero on most supports and their candidates run out; all other methods use the whole 120\,m budget. Table~\ref{tab:perhouse} gives the present-target SR of every house, and Table~\ref{tab:mechanism} splits the 50 queries into thirds by the opportunity summed over their true supports and weighted by occurrence probability, $\sum_j g_j\sum_e c_{je}$ (cut points 2.27 and 3.92).

\begin{table}[h]
\caption{Paired differences, \ECROM{} minus each method, with 95\% house-then-query bootstrap intervals. Positive values favor \ECROM{} except for $\mathrm{NLL}_{\mathrm{Ber}}$, where negative values do.}
\label{tab:paired}
\centering
\scriptsize
\setlength{\tabcolsep}{3pt}
\begin{tabular}{lccc}
\toprule
Method & AP & Top-5 & AUROC \\
\midrule
Detection count & +.068 [+.034, +.105] & +.050 [+.016, +.089] & +.101 [+.053, +.151] \\
Visit frequency & +.101 [+.058, +.147] & +.106 [+.044, +.182] & +.141 [+.090, +.193] \\
Opportunity-normalized count & +.045 [+.005, +.092] & +.057 [+.001, +.123] & +.105 [+.059, +.154] \\
PredictiveGraphs-style & +.056 [+.017, +.095] & +.105 [+.012, +.196] & +.101 [+.052, +.151] \\
RAVEN-style retrieval & +.067 [+.031, +.105] & +.027 [$-$.054, +.109] & +.160 [+.113, +.205] \\
DynaMem-style latest state & +.115 [+.060, +.174] & +.171 [+.085, +.270] & +.132 [+.067, +.199] \\
Coverage (no memory) & +.205 [+.144, +.269] & +.361 [+.264, +.463] & +.260 [+.212, +.303] \\
\midrule
Without exposure term ($c_{je}{=}1$) & +.025 [$-$.005, +.060] & +.026 [$-$.016, +.080] & +.042 [$-$.017, +.100] \\
Binary exposure ($\ind[c_{je}{>}0]$) & +.021 [$-$.001, +.045] & +.014 [$-$.003, +.037] & +.027 [+.006, +.051] \\
Binary sensor ($\tau{=}.25$) & +.021 [$-$.006, +.050] & +.022 [$-$.019, +.073] & +.014 [$-$.001, +.034] \\
Binary sensor ($\tau{=}.15$) & +.041 [+.017, +.069] & +.049 [$-$.001, +.112] & +.008 [$-$.004, +.020] \\
\bottomrule
\end{tabular}

\vspace{4pt}
\begin{tabular}{lccc}
\toprule
Method & $\mathrm{NLL}_{\mathrm{Ber}}$ & SR & SPL \\
\midrule
Detection count & -- & +.147 [+.047, +.260] & +.051 [$-$.003, +.108] \\
Visit frequency & $-$.0524 [$-$.0724, $-$.0365] & +.153 [+.060, +.267] & +.083 [+.024, +.150] \\
Opportunity-normalized count & $-$.0486 [$-$.0731, $-$.0291] & +.120 [+.027, +.227] & +.071 [+.003, +.149] \\
PredictiveGraphs-style & $-$.0015 [$-$.0050, +.0011] & +.053 [$-$.013, +.127] & +.043 [$-$.037, +.117] \\
RAVEN-style retrieval & -- & +.167 [+.067, +.280] & +.071 [+.006, +.143] \\
DynaMem-style latest state & -- & +.187 [+.087, +.307] & +.115 [+.036, +.208] \\
Coverage (no memory) & $-$.0070 [$-$.0103, $-$.0045] & +.340 [+.193, +.487] & +.308 [+.200, +.422] \\
\midrule
Without exposure term ($c_{je}{=}1$) & $-$.0023 [$-$.0038, $-$.0010] & +.067 [$-$.040, +.167] & +.049 [$-$.009, +.098] \\
Binary exposure ($\ind[c_{je}{>}0]$) & +.0000 [$-$.0002, +.0002] & +.047 [$-$.027, +.133] & +.021 [$-$.021, +.064] \\
Binary sensor ($\tau{=}.25$) & $-$.0010 [$-$.0015, $-$.0004] & +.113 [+.040, +.193] & +.043 [$-$.007, +.092] \\
Binary sensor ($\tau{=}.15$) & $-$.0022 [$-$.0031, $-$.0015] & +.047 [$-$.027, +.113] & +.079 [+.012, +.143] \\
\bottomrule
\end{tabular}
\end{table}

\begin{table}[h]
\begin{minipage}[t]{0.52\linewidth}
\centering
\captionof{table}{Search over the 128 present-target trials and mean travel (m) on the 22 absent-target trials.}
\label{tab:present}
\scriptsize
\setlength{\tabcolsep}{3pt}
\begin{tabular}{lccc}
\toprule
Method & SR & SPL & Absent travel \\
\midrule
\ECROM{} (ours) & .688 & .447 & 120.0 \\
Detection count & .516 & .387 & 61.2 \\
Visit frequency & .508 & .350 & 57.7 \\
Opportunity-normalized count & .547 & .364 & 80.5 \\
PredictiveGraphs-style & .625 & .397 & 120.0 \\
RAVEN-style retrieval & .492 & .363 & 71.6 \\
DynaMem-style latest state & .469 & .313 & 120.0 \\
Coverage (no memory) & .289 & .086 & 120.0 \\
Without exposure term ($c_{je}{=}1$) & .609 & .390 & 120.0 \\
Binary exposure ($\ind[c_{je}{>}0]$) & .633 & .422 & 120.0 \\
Binary sensor ($\tau{=}.25$) & .555 & .397 & 120.0 \\
Binary sensor ($\tau{=}.15$) & .633 & .355 & 120.0 \\
\bottomrule
\end{tabular}
\end{minipage}\hfill
\begin{minipage}[t]{0.45\linewidth}
\centering
\captionof{table}{Present-target SR per house: \ECROM{}, without exposure term ($c{=}1$), binary exposure ($\ind[c{>}0]$) and PredictiveGraphs-style (PG); $n$ present trials.}
\label{tab:perhouse}
\scriptsize
\setlength{\tabcolsep}{3pt}
\begin{tabular}{lccccc}
\toprule
House & \ECROM{} & $c{=}1$ & $\ind[c{>}0]$ & PG & $n$ \\
\midrule
2Pc8 & .929 & .643 & .857 & 1.000 & 14 \\
5biL & .917 & .917 & .917 & .917 & 12 \\
7MXm & .615 & .308 & .462 & .385 & 13 \\
DqJK & .308 & .615 & .385 & .308 & 13 \\
HfMo & .467 & .400 & .400 & .400 & 15 \\
iePH & .923 & .769 & 1.000 & .846 & 13 \\
JptJ & .750 & .750 & .750 & .750 & 12 \\
L5QE & .583 & .417 & .583 & .500 & 12 \\
q5QZ & .727 & .636 & .636 & .636 & 11 \\
XiJh & .692 & .692 & .385 & .538 & 13 \\
\bottomrule
\end{tabular}
\end{minipage}
\end{table}

\begin{table}[h]
\caption{Gain of the exposure term by how well the true supports were observed: \ECROM{} minus the variant without exposure term ($c_{je}{=}1$), averaged over the queries of each third.}
\label{tab:mechanism}
\centering
\small
\begin{tabular}{lccccc}
\toprule
Queries & $n$ & $\Delta$AP & $\Delta$Top-5 & $\Delta$AUROC & $\Delta\mathrm{NLL}_{\mathrm{Ber}}$ \\
\midrule
Least observed third & 17 & +.057 & +.052 & +.112 & $-$.0032 \\
Middle third & 16 & +.020 & +.013 & +.038 & $-$.0031 \\
Best observed third & 17 & $-$.001 & +.011 & $-$.024 & $-$.0005 \\
\bottomrule
\end{tabular}
\end{table}

\section{Real-Robot Experiment: Per-Query Results}
\label{app:realrobot}

Table~\ref{tab:realrobot-perquery} lists the outcome of every real-robot query of Section~\ref{sec:realrobot}; Table~\ref{tab:realrobot} in the main text aggregates these 15 trials per method. On the robot, the belief is updated with the detector event $\ind[r\ge\tau]$ described below instead of the verified discovery of Eq.~\ref{eq:update}; travel of a failed trial is the distance traveled when its budget was exhausted.

\paragraph{Belief update on the robot.} The robot plans with the index rule of Eq.~\ref{eq:index}. After each scored frame, the belief of the single support $j$ the frame is assigned to (from its camera pose) is multiplied by
\begin{equation}
\Lambda_D=\begin{cases}p_{\mathrm{det}}/p_{\mathrm{false}}, & r\ge\tau,\\ (1-p_{\mathrm{det}})/(1-p_{\mathrm{false}}), & r<\tau,\end{cases}
\end{equation}
all other supports keep their belief, unlike the exposure-weighted update of Eq.~\ref{eq:update}, and the belief is renormalized. $p_{\mathrm{det}}$ and $p_{\mathrm{false}}$ are the weighted rates $\Prob(r\ge\tau\mid\text{visible})$ and $\Prob(r\ge\tau\mid\text{not visible})$ of the VLM-labeled keyframes. A trial ends when the detector fires at an inspected view and the operator confirms the target; the operator rejected no trigger in the 45 trials. Table~\ref{tab:sim-vs-robot} lists every difference between the simulated and the robot system.

\begin{table}[h]
\caption{Differences between the simulated system of Table~\ref{tab:sim-results} and the robot system of Table~\ref{tab:realrobot}.}
\label{tab:sim-vs-robot}
\centering
\small
\setlength{\tabcolsep}{4pt}
\begin{tabularx}{\linewidth}{>{\raggedright\arraybackslash}p{3.0cm}>{\raggedright\arraybackslash}X>{\raggedright\arraybackslash}X}
\toprule
Component & Simulation & Robot \\
\midrule
Mapping traversals & 20 per house & 10 \\
Likelihood ratio $\Lambda$ & calibration objects, simulator labels $Z_{je}$ & VLM labels of stored keyframes (whole-image question), shrunk towards the simulation histograms \\
Live belief update & verified discovery, Eq.~\ref{eq:update}, on every support weighted by its exposure $c_j(u)$ & detector event $\ind[r\ge\tau]$ with $p_{\mathrm{det}}$ and $p_{\mathrm{false}}$, on the one support each frame is assigned to \\
Success & detector fires and at least 500 target pixels are in the executed view & detector fires and the operator confirms \\
Budget & 120\,m of travel & 20 inspected views \\
\bottomrule
\end{tabularx}
\end{table}

\begin{table}[h]
\caption{Real-robot search on the Stretch~3 per query: successful trials out of three and mean travel in meters for each memory (mean over the three trials, failed trials included at the distance traveled before stopping). The last two rows aggregate over all 15 trials. Bold marks the best method.}
\label{tab:realrobot-perquery}
\centering
\small
\setlength{\tabcolsep}{5pt}
\renewcommand{\arraystretch}{1.08}
\begin{tabular}{lcc@{\hspace{12pt}}cc@{\hspace{12pt}}cc}
\toprule
& \multicolumn{2}{c}{\ECROM{} (ours)} & \multicolumn{2}{c}{Detection count} & \multicolumn{2}{c}{PredictiveGraphs-style} \\
\cmidrule(lr){2-3}\cmidrule(lr){4-5}\cmidrule(lr){6-7}
Query & Success & Travel (m) & Success & Travel (m) & Success & Travel (m) \\
\midrule
red cup & 3/3 & 25.4 & 2/3 & 49.9 & 2/3 & 42.2 \\
luncheon meat & 2/3 & 17.7 & 0/3 & 17.1 & 1/3 & 29.7 \\
headphones & 3/3 & 24.7 & 3/3 & 26.8 & 3/3 & 26.9 \\
cupcakes & 3/3 & 20.0 & 3/3 & 21.5 & 3/3 & 23.1 \\
scissors & 3/3 & 9.0 & 2/3 & 30.4 & 2/3 & 25.6 \\
\midrule
All trials (SR, travel) & \textbf{14/15 (.93)} & \textbf{19.4} & 10/15 (.67) & 29.1 & 11/15 (.73) & 29.5 \\
Views inspected & \multicolumn{2}{c}{\textbf{4.7}} & \multicolumn{2}{c}{9.3} & \multicolumn{2}{c}{8.9} \\
\bottomrule
\end{tabular}
\end{table}

\end{document}